\DeclareMathAlphabet{\mathcal}{OMS}{lmsy}{m}{n}
\DeclareSymbolFont{largesymbols}{OMX}{cmex}{m}{n}
\setlist[enumerate]{itemsep=0mm}
\date{\protect\formatdate{1}{1}{2001}}
\g@addto@macro{\UrlBreaks}{\UrlOrds}
\newcommand{\ignore}[1]{}
\newcommand{\seclabel}[1]{\label{sec:#1}}
\newcommand*\Eval[3]{\left.#1\right\rvert_{#2}^{#3}}
\newcommand{\algoName}{\textsc{AdaPT}\xspace}
\newcommand{\algoFull}{Adaptive Policy Transfer for Stochastic Dynamics\xspace}
\begin{document}

%only do this for refs
% \renewcommand{\baselinestretch}{0.95}

\title*{\textsc{AdaPT}: Zero-Shot Adaptive Policy Transfer \\for Stochastic Dynamical Systems}
\titlerunning{\textsc{AdaPT}:Adaptive Policy Transfer} 
% for an abbreviated version of
% your contribution title if the original one is too long

\author{James Harrison$^{1}$, Animesh Garg$^{2}$, Boris Ivanovic$^2$, Yuke Zhu$^2$, Silvio Savarese$^2$, Li Fei-Fei$^2$, Marco Pavone$^3$ 
}
\authorrunning{Harrison et al.}

% COMPACT THANKS
% \institute{
% $^1$ME, $^2$CS, $^3$AA, Stanford University \emailbrack{jh2, borisi, yukez, ssilvio, pavone}\url{@cs.stanford.edu}, \texttt{\{garg,feifeili\}}\url{@cs.stanford.edu}
% }

% DEFAULT THANKS
\institute{
James Harrison \at Department of Mechanical Engineering, Stanford University, Stanford, CA 94305\\ \email{jharrison@stanford.edu}
\and Animesh Garg, Boris Ivanovic, Yuke Zhu, Li Fei-Fei, Silvio Savarese \at Department of Computer Science, Stanford University, Stanford, CA 94305\\ \emailbrack{garg,borisi,yukez,feifeili,ssilvio}\url{@cs.stanford.edu}
\and Marco Pavone \at Department of Aeronautics and Astronautics, Stanford University, Stanford, CA 94305\\ \email{pavone@stanford.edu}
}

\maketitle

\setlength{\belowdisplayskip}{2pt} \setlength{\belowdisplayshortskip}{2pt}
\setlength{\abovedisplayskip}{2pt} \setlength{\abovedisplayshortskip}{2pt}

% \titlespacing*{\subsection}
%   {0pt}{1.4\baselineskip}{1.0\baselineskip}

% \vspace{-60pt}
\abstract{Model-free policy learning has enabled good performance on complex tasks that were previously intractable with traditional control techniques. However, this comes at the cost of requiring a perfectly accurate model for training. This is infeasible due to the very high sample complexity of model-free methods preventing training on the target system. This renders such methods unsuitable for physical systems. Model mismatch due to dynamics parameter differences and unmodeled dynamics error may cause suboptimal or unsafe behavior upon direct transfer. We introduce the \algoFull(\algoName) algorithm that achieves provably safe and robust, dynamically-feasible zero-shot transfer of RL-policies to new domains with dynamics error. \algoName combines the strengths of offline policy learning in a black-box source simulator with online tube-based MPC to attenuate bounded dynamics mismatch between the source and target dynamics. \algoName allows online transfer of policies, trained solely in a simulation offline, to a family of unknown targets without fine-tuning. We also formally show that (i) \algoName guarantees bounded state and control deviation through state-action tubes under relatively weak technical assumptions and, (ii) \algoName results in a bounded loss of reward accumulation relative to a policy trained and evaluated in the source environment.
% \algoName combines robust policy learning in a simulation with an auxiliary stabilizing controller in the target environment to attenuate both systematic model mismatch error and unmodeled disturbances.
We evaluate \algoName on $2$ continuous, non-holonomic simulated dynamical systems with 4 different disturbance models, and find that \algoName performs between $50\%$-$300\%$ better on mean reward accrual than direct policy transfer.}

% \vspace{-60pt}
% \abstract{Offline policy learning in simulated domains has the potential to address high sample complexity in reinforcement learning (RL). However, unmodelled environmental features may cause suboptimal or unsafe behavior upon transfer of these policies to a target domain. Model-based RL techniques can address some of these issues with fine-tuning, but even minor model mismatch can result in strongly suboptimal online performance. We present \algoFull(\algoName), which achieves dynamically feasible zero-shot transfer of learned policies to new domains with variations in dynamics, while maintaining bounded loss in reward accrual. \algoName combines robust policy learning in simulation with an auxiliary stabilizing controller in the target environment to attenuate both systematic model-mismatch error and unmodelled disturbances.}

%\vspace{-20pt}
\section{Introduction}
%\vspace{-10pt}

Deep reinforcement learning (RL) has achieved remarkable advances in sequential decision making in recent years, often outperforming humans on tasks such as Atari games \cite{mnih2015human}. 
However, model-free variants of deep RL are not directly applicable to physical systems because they exhibit poor sample complexity, often requiring millions of training examples on an accurate model of the environment. 
One approach to using model-free RL methods on robotic systems is thus to train in a relatively accurate simulator (a source domain), and transfer the policy to the physical robot (a target domain). 
This naive transfer may, in practice, perform arbitrarily badly and so online fine-tuning may be performed~\cite{abbeel2006using}. 
During this fine-tuning, the robot may behave unsafely however, and so it is desirable for a system to be able to train in a simulator with slight model inaccuracies but still be able to perform well on the target system on the first iteration. 
We refer to this as the \textit{zero-shot policy transfer problem}.
%Deep networks allow representation of analytically intractable dynamics functions~\cite{moerland2017learning}, thus enabling model-based policy learning for complex dynamics models~\cite{mordatch2015interactive,ChebotarHZSSL17}.
%This is referred to as the policy transfer problem, i.e. first learn a policy in a source domain (e.g., simulation), with low-cost evaluation and resets, and then transfer it to the target domain (e.g., a real-world robot) either directly~\cite{rusu2016sim,thananjeyan2017multilateral,zhu2016target} or with fine-tuning~\cite{abbeel2006using,rajeswaran2016epopt}. 
%On the contrary, the control literature suggests the use of model-predictive control~(MPC) for such methods with a simplified model. However, such methods still need an analytic model and can get stuck in local minima if the solution is not initialized properly.
% A popular treatment of the high sample complexity of deep RL models is use transfer with Model-based methods, that is to first learn a policy in a source domain (e.g., simulation), where the cost of samples and episodic resets is relatively cheap, and then transfer it to the target domain (e.g., a real-world robot) either directly~\cite{rusu2016sim,thananjeyan2017multilateral,zhu2016target} or with fine-tuning~\cite{abbeel2006using,rajeswaran2016epopt}. 

%This paper studies the challenge of training in simulation and directly operating in a target environment as the \emph{policy adaptation in direct transfer} problem as reviewed in \cite{Taylor2009TransferLF}. 
The zero-shot transfer problem involves training a policy on a system possessing different dynamics than the target system, and evaluating performance as the average initial return in target domain \textit{without} training in the target domain. 
%In the machine learning and reinforcement learning literature, this is also known as \textit{zero-shot} transfer. 
This problem is challenging for robotic systems since simplified simulated models may not always accurately capture all relevant dynamics phenomena, such as friction, structural compliance, turbulence and so on, as well as parametric uncertainty in the model. 
In spite of the renewed focus on this problem, few studies in deep policy adaptation offer insightful analysis or guarantees regarding feasibility, safety, and robustness in policy transfer.

In this paper, we introduce a new algorithm which we refer to as \algoName, that achieves provably safe and robust, dynamically-feasible zero-shot direct transfer of RL policies to new domains with dynamics mismatch. The key insight here is to leverage the global optimality of learned policy with local stabilization from MPC based methods to enable dynamic feasibility, thereby building on strengths of two different methods.
In the offline stage, \algoName first computes a nominal trajectory (without disturbance) by executing the learned policy on the simulator dynamics. Then in the online stage, \algoName adapts the nominal trajectory to the target dynamics with an auxiliary MPC controller. 

\runinhead{Statement of Contributions} 
\begin{enumerate}[
    topsep=0pt,
    noitemsep,
    % partopsep=1ex,
    % parsep=1ex,
    leftmargin=*,
    itemindent=3ex]
\item We develop the \algoName algorithm, which allows online transfer of policy trained solely in a simulation offline, to a family of unknown targets without fine-tuning.
\item We also formally show that (i) \algoName guarantees state and control safety through state-action tubes under the assumption of Lipschitz continuity of the divergence in dynamics and, (ii) \algoName results in a bounded loss of reward accumulation in case of direct transfer with \algoName as compared to a policy trained only on target. 
\item We evaluate \algoName on two continuous, non-holonomic simulated dynamical systems with four different disturbance models, and find that \algoName performs between $50\%$-$300\%$ better on mean reward accrual than direct policy transfer as compared to mean reward.
\end{enumerate}
% \runinhead{Statement of Contributions} In this work, we develop the \algoName algorithm, which allows for zero-shot, robust transfer of policies learned in a simulation. As opposed to directly using learned policies in the target environment, \algoName first generates a trajectory in the nominal environment that is near-optimal for those dynamics, and then it uses a robust auxiliary controller to track this trajectory in the target environment. This approach allows direct transfer of learned policies and results in robustness to unmodelled disturbances. \todoJH{extend this to have more info}

\runinhead{Organization} This paper is structured as follows. In Section 2 we review related work in robust control, robust reinforcement learning, and transfer learning. In Section 3 we formally state the policy transfer problem. In Section 4 we present \algoName and discuss algorithmic design features. In Section 5 we prove the accrued reward for \algoName is lower bounded. In Section 6 we present experimental results on a simulated car environment and a two-link robotic manipulator, as well as present results for \algoName with robust policy learning methods. Finally, in Section 7 we draw conclusions and discuss future directions.

%\vspace{-20pt}
\section{Related Work and Background}
%\vspace{-10pt}

A plethora of work in both learning and control theory has addressed the problem of varying system dynamics, especially in the context of safe policy transfer and robust control. %Recently deep neural networks have been used as rich function approximators to represent dynamical models and policy networks~\cite{mnih2015human,lillicrap2015continuous}. Our method uses these expressive networks for policy learning and dynamics fitting. 
%Our model uses deep neural networks for policy learning and dynamics fitting \todoJH{can get rid of this statement, or make it accurate}. 
% We first provide an overview of recent work on model-based reinforcement learning and policy transfer. We then review related work on robust control.

%In this work, we use neural networks to fit a global dynamics model \todoJH{We are model agnostic, but must be efficiently linearizable}. This model does not faithfully represent the complexity of the true dynamics, especially with rich contacts; however, we find that it offers a sufficiently good estimate in our experiments \todoJH{no we don't}.

\runinhead{Transfer in reinforcement learning} The problem of high sample complexity in reinforcement learning has generated considerable interest in policy transfer.
Taylor et al. provide an excellent review of approaches to the transfer learning problem~\cite{Taylor2009TransferLF}.
%Value function bounds of changes in the transition probabilities, the immediate rewards, and the discount factor have been studied~\cite{csaji2008value}\todoJH{rewrite}. 
A series of approaches focused on reducing the number of rollouts performed on a physical robot, by alternating between policy improvement in simulation and physical rollouts~\cite{abbeel2006using,levine2016end}. In those works, a time-dependent term is added to the dynamics after each physical rollout to account for unmodeled error. This approach, however, does not address robustness in the initial transfer, and the system could sustain or cause damage before the online learning model converges. 

The \textsc{EPOpt} algorithm~\cite{rajeswaran2016epopt} randomly samples dynamics parameters from a Gaussian distribution prior to each training run, and optimizes the reward for the worst-performing $\epsilon$-fraction of dynamics parameters. However, it is not clear how robust it is against disturbances not explicitly experienced in training. This approach is conceptually similar to that in \cite{mordatch2015ensemble}, in which more traditional trajectory optimization methods are used with an ensemble of models to increase robustness. Similarly, \cite{mandlekar2017arpl} and \cite{pinto2017robust} use adversarial disturbances instead of random dynamics parameters for robust policy training. Tobin et al. \cite{tobin2017domain} and Peng et al. \cite{peng2017sim} randomize visual inputs and dynamics parameters respectively. Bousmalis et al. \cite{bousmalis2017using} meanwhile adapt rendered visual inputs to reality using a framework based on generative adversarial networks, as opposed to strictly randomizing them. While this may improve adaptation to a target environment in which these parameters are varied, this may not improve performance on dynamics changes outside of those varied; in effect, it does not mitigate errors due to the ``unknown unknowns''. 

Christiano et al. \cite{christiano2016transfer} approach the transfer problem by training an inverse dynamics model on the target system and generating a nominal trajectory of states. The inverse dynamics model then generates actions to connect these states. However, there are no guarantees that an action exists in the target dynamics to connect two learned adjacent states. Moreover, this requires training on the target environment; in this work we consider zero-shot learning where this is not possible. Recently, the problem of transfer has been addressed in part by rapid test adaptation~\cite{devin2016learning,rusu2016progressive}. These approaches have focused on training modular networks that have both ``task-specific'' and ``robot-specific'' modules. This then allows the task-specific module to be efficiently swapped out and retrained. However, it is unclear how error in the learned model affects these methods. 

In this work we aim to perform zero-shot policy transfer, and thus efficient model-based approaches are not directly applicable. However, our approach uses an auxiliary control scheme that leverages model learning for an approximate dynamics model. When online learning is possible, sample-efficient model-based reinforcement learning approaches can dramatically improve sample complexity, largely by leveraging tools from planning and optimal control~\cite{kober2013reinforcement}. However, these models require an accurate estimate of the true system dynamics in order to learn an effective policy. A variety of model classes have been used to represent system dynamics, such as neural networks~\cite{heess2015learning}, Gaussian processes~\cite{deisenroth2011pilco}, and local linear models~\cite{gu2016continuous,levine2016end}. 

\runinhead{Robust control}
%Robust control methods have formally studied the problem of developing robust policies that deal with uncertainty~\cite{zhou1996robust}. 
Trajectory optimization methods have been widely used for robotic control~\cite{tassa2012synthesis}. Among these optimization methods, model predictive control (MPC) is a class of online methods that perform trajectory optimization in a receding-horizon fashion~\cite{neunert2016fast}. This receding-horizon approach, in which a finite-horizon, open-loop trajectory optimization problem is continuously re-solved, results in an online control algorithm that is robust to disturbances. Several works have attempted to combine trajectory optimization methods with dynamics learning~\cite{mitrovic2010adaptive} and policy learning~\cite{kahn2016plato}. In this work, we develop an auxiliary robust MPC-based controller to guarantee robustness and performance for learned policies. Our method combines the strengths of deep policy networks~\cite{schulman2015trust} and tube-based MPC~\cite{mayne2011tube} to offer a controller with good performance as well as robustness guarantees.  

%Model ensembles with perturbed parameters have been shown to improve the robustness of planning on physical robots~\cite{mordatch2015ensemble}. 

% \todo{Add the following refs}
% Transfer RL: \cite{Taylor2009TransferLF}
% Safe RL: \cite{garcia2015comprehensive}
% UP-OSI \cite{Yu2017}
% \cite{mastin2012loss}
% \cite{Tamar2013}
% \cite{nilim2005robust}

%The closest is action transformation. 
%we need sim to real transfer krause icra'17
%%%%

%\vspace{-20pt}
\section{Problem Setup and Preliminaries}
%\vspace{-10pt}

Consider a finite-horizon Markov Decision Process ($\mathcal{M}$) defined as a tuple $\mathcal{M}:\langle \mathcal{S},\mathcal{A}, p, r, T\rangle$. Here $\mathcal{S}$ and $\mathcal{A}$ represent continuous, bounded state and action spaces for the agent, 
$r: \mathcal{S}\times \mathcal{A} \rightarrow \mathbb{R}$ is the reward function that maps a state-action tuple to a scalar, and $T$ is the problem horizon.
Finally, $p : \mathcal{S} \times \mathcal{S} \times \mathcal{A} \to [0,1]$ is the transition distribution that captures the state transition dynamics in the environment and is a distribution over states conditioned on the previous state and action. The goal is to find a policy $\pi: \mathcal{S}\rightarrow \mathcal{A}$ that maximizes the expected cumulative reward over the choice of policy:
% \[
%\vspace{-5pt}
\begin{equation}
\begin{aligned}
& \pi^*(s) = \underset{\pi(s)}{\text{argmax}} \: \mathbb{E}\left[ \sum_{t=0}^{T} r(s_t,a_t) \right].
% \]
\end{aligned}
\end{equation}
%\vspace{-5pt}

The above reflects a standard setup for policy optimization in continuous state and action spaces. In this work, we are interested in the case in which we only have an approximately correct environment, which we refer to as the source environment (e.g. a physics simulator). We may sample this simulator an unlimited number of times, but we wish to maximize performance on the first execution in a target environment. Without any assumptions on the correctness of the simulator, this problem is of course intractable as the two sets of dynamics may be arbitrarily different. However, relatively loose assumptions about the correctness of the simulator are very reasonable, based on the modeling fidelity of the simulator. We assume the simulator (denoted $\mathcal{M}_{S}$) has deterministic, twice continuously-differentiable dynamics $s_{t+1} = f(s_t,a_t)$. Then, let the dynamics of the target environment (denoted $\mathcal{M}_{T}$) be denoted $s_{t+1} = f(s_t,a_t) + w_t$, for iid additive noise $w_t$ with compact, convex support $\mathcal{W}$ that contains the origin. Generally, the noise distribution may be state and action dependent, so this formulation reduces to standard formulations in both robust and stochastic control \cite{zhou1996robust}. We assume all other components of the MDPs defining the source and target environments are the same (e.g. reward function). Finally, we assume the reward function $r$ is Lipschitz continuous, an assumption that we discuss in more detail in section 5.
Based on the above definitions, we can now state the problem we aim to solve. 

\runinhead{Problem Statement} Given the simulator dynamics and the problem defined by the MDP $\mathcal{M}_S$, we wish to learn a policy to maximize the reward accrued during operation in the target system, $\mathcal{M}_T$. Formally, if we write the realization of the disturbance at time $t$ as $\tilde{w}_t$, we wish to solve the problem:
%\vspace{-5pt}
\begin{equation}
\begin{aligned}
% &\underset{\textbf{a}}{\text{max}}\,\mathbb{E}\big[ \sum_{t=0}^{T} R(s_t,a_t) \big] \quad \text{s.t.}\, s_{t+1} = f(s_t,a_t) + w_t,\, s_t \in \mathcal{S},\, a_t \in  \mathcal{A}, \, \forall t \in [0, T],
&\underset{\{a_t\}_{t=0}^T}{\text{max}}\,\,\,\,\mathbb{E}\left[ \sum_{t=0}^{T} r(s_t,a_t) \right] \quad\\
&\,\,\,\text{s.t.}\,\,\quad s_{t+1} = f(s_t,a_t) + \tilde{w}_t, \, \text{and}\,\, s_t \in \mathcal{S},\, a_t \in  \mathcal{A} \quad \forall\, t \in [0, T],
%\vspace{-5pt}
\end{aligned}
\end{equation}
while only having access to the simulator, $\mathcal{M_S}$, for training. 

%\vspace{-20pt}
\section{\algoName: \algoFull}
\seclabel{adapt}
%\vspace{-10pt}

In this section we present the \algoName algorithm for zero-shot transfer. A high level view of the algorithm is presented in Algorithm~\ref{alg:overview}. First, we assume that a policy is trained in simulation. Our approach is to first compute a nominal trajectory (without disturbance) by continuously executing the learned policy on the simulator dynamics. Then, when transferred to the target environment, we use an auxiliary model predictive control-based (MPC) controller to stabilize around this nominal trajectory. In this work, we use a reward formulation for operation in the primary environment (i.e, the aim is to maximize reward), and a cost formulation for the auxiliary controller (i.e., the aim is to minimize cost to thus minimize deviation from the nominal trajectory). This is in part to disambiguate the distinction between the primary and auxiliary optimization problems. 

\runinhead{Policy Training} We use model-free policy optimization on the black-box simulated model. Our theoretical guarantees rely on the auxiliary controller avoiding saturation. Therefore, if a policy operates near the limits of its control authority and thus the auxiliary controller saturates when used on the target environment, this policy is trained using restricted state and action spaces $\mathcal{S}' \subseteq \mathcal{S}$, $\mathcal{A}' \subseteq \mathcal{A}$. We let $\mathcal{M'}$ denote an MDP with restricted state and action spaces. This follows the approach of \cite{mayne2011tube}, where it is used to prevent auxiliary controller saturation. Intuitively, restricting the state and action space ensures any nominal trajectory in those spaces can be stabilized by the auxiliary controller. Therefore, if saturation is rare, restricting these sets is unnecessary. 

\algoName is invariant to the choice of policy optimization method. 
During online operation, a nominal trajectory $\tau = \{(\bar{s}_t, \bar{a}_t)\}_{t=0}^T$ is generated by rolling out the policy on the simulator dynamics, $\mathcal{M_S}$. The auxiliary controller then tracks this trajectory in the target environment. 

\runinhead{Approximate Dynamics Model} Because the model of the simulator is treated as a black-box, it is impractical to use for the auxiliary controller in an optimal control framework. As such, we rely on an approximate model of the dynamics, separate from the simulator dynamics $f$, which we refer to as $\hat{f}$. The specific representation of the model (e.g. linear model, feedforward neural network, etc.) depends on both the accuracy required as well as the method used to solve the auxiliary control problem. This model may be either learned from the simulator, or based on prior knowledge. A substantial body of literature exists on dynamics model learning from black-box systems \cite{moerland2017learning}. Alternatively, this model may be based on external knowledge, either from learning a dynamics model in advance from the target system or from, for example, a physical model of the system. 

\runinhead{Auxiliary MPC Controller} Our auxiliary nonlinear MPC controller is based on that of \cite{mayne2011tube}. Specifically, we write the auxiliary control problem:
\begin{equation}
\begin{aligned}
& \underset{\{a_k\}_{k=t}^{t+N}}{\text{min}}\quad \sum_{k=t}^{t+N} (s_k - \bar{s}_k)^T Q_k (s_k - \bar{s}_k) + (a_k - \bar{a}_k)^T R_k (a_k - \bar{a}_k)  \\
&\,\,\,\,\,\text{s.t.}\,\quad\quad s_{k+1} = \hat{f}(s_k,a_k), \, \text{and}\,\, s_k \in \mathcal{S},\, a_k \in  \mathcal{A} \quad \forall\, k \in [t,t+N],
\end{aligned}
\label{mpc-primary}
\end{equation}
where $N$ is the MPC horizon, $Q_k$ and $R_k$ are positive definite cost matrices for the state deviation and control deviation respectively, and $\hat{f}$ is the approximate dynamics model. In some cases, this problem is convex, but generally it may not be. In our experiments, this optimization problem is solved with iterative relinearization based on \cite{todorov2005generalized}. However, whereas they iteratively linearize the nonlinear optimal control problem and solve an LQR problem over the full horizon of the problem, we explicitly solve the problem over the MPC horizon. We do not consider terminal state costs or constraints. This formulation of the auxiliary controller by \cite{mayne2011tube} allows us to guarantee, under our assumptions, that our true state stays in a tube around the nominal trajectory, where the tube is defined by level sets of the value function (the details of this are addressed in Section 5). 

\begin{algorithm}[t]
\caption{\algoFull\,(\algoName)  \label{alg:overview}}
\begin{algorithmic}[1]
    %input
    \REQUIRE Source Env: $\mathcal{M}_S$, Target Env: $\mathcal{M}_T$, Initial State: $s_0$
    % \COMMENT{Calculate Constrained State \& Action Space}
    
    Offline:
    
    \STATE $\mathcal{A}', \mathcal{S}' \gets \texttt{bound\_set}(\mathcal{A}, \mathcal{S})$%, s.t. $\mathcal{A}' \subseteq \mathcal{A}$, $\mathcal{S}' \subseteq \mathcal{S}$, 
    \hfill \COMMENT{\footnotesize Calculate constrained state \& action space} 
    % \STATE $\mathcal{A}' \gets \alpha \mathcal{A}$, s.t. \,$\alpha \propto d(\omega_S,\omega_T ),\, \alpha \in (0,1]$ 
    % \STATE $\mathcal{S}' \gets \beta \mathcal{S}$, s.t. \,$\beta \propto d(\omega_S,\omega_T ),\, \beta \in (0,1]$ 
    
    \STATE $\pi \gets \texttt{policy\_opt} \big(\mathcal{M}'_S\big)$ \hfill \COMMENT{\footnotesize Train a policy for $\mathcal{M}'_S$ using constrained $\mathcal{S}',\mathcal{A}'$}
    
    \STATE $\hat{f} \gets \texttt{fit\_dynamics}\big(\mathcal{M}_S\big)$ \hfill \COMMENT{\footnotesize Fit Dynamics for $\mathcal{M}_S$}
    
    Online:
    
    \STATE $\tau \gets \texttt{rollout}\big(s_0, \pi, \mathcal{M}_S, T\big)$ \hfill \COMMENT{\footnotesize Roll out $\pi$ on $\mathcal{M}_S$ to get nominal trajectory}

    \STATE $s \gets s_0$
    
    \FOR{$t \in [0, T]$}
        % \STATE{
            \STATE $a \gets \texttt{aux\_MPC}\big(s, \tau, \hat{f}, \tau, N\big)$ \hfill \COMMENT{\footnotesize NMPC with iterative linearization}          \\            
            \STATE $s \gets f(s, a) + w$ %,  $s_{t} \gets s_{t+1}$
            \hfill \COMMENT{\footnotesize Rollout the first step of action seq. on $\mathcal{M}_T$}    
            % }
    \ENDFOR
    
    %output
    % \ENSURE $\textbf{a}:[a_0, \ldots, a_T]$
\end{algorithmic}
\end{algorithm}
\setlength{\textfloatsep}{5pt}% Remove \textfloatsep

The solution to the MPC problem is iterative. First, we linearize around the nominal trajectory $\tau$. We introduce the notation $\{(\hat{s}_{k}, \hat{a}_k)\}_{k=t}^{k=t+N}$, which is the solution for the last iteration. These are initialized as $\hat{s}_{t} \gets \bar{s}_{t}$ and $\hat{a}_{t} \gets \bar{a}_{t}$. Then, we introduce the deviations from this solution as 
% \begin{equation}
% \delta s_{t} = s_{t} - \hat{s}_{t}
% \end{equation}
% \begin{equation}
% \delta a_{t} = a_{t} - \hat{a}_{t}.
% \end{equation}
\begin{equation}
\begin{aligned}
\delta s_{t} &= s_{t} - \hat{s}_{t}, &\quad \delta a_{t} &= a_{t} - \hat{a}_{t}.
\end{aligned}
\end{equation}
Then, taking the linearization of our dynamics 
\begin{equation}
A_t = \Eval{\frac{\partial \hat{f}}{\partial s_t}}{{s_t = \hat{s}_t, a_t = \hat{a}_t}}{} \quad
B_t = \Eval{\frac{\partial \hat{f}}{\partial a_t}}{{s_t = \hat{s}_t, a_t = \hat{a}_t}}{}, 
\end{equation}
we can rewrite the MPC problem as:
\begin{equation}
\begin{aligned}
& \underset{\{\delta a_k\}_{k=t}^{t+N}}{\text{min}}\quad \sum_{k=t}^{t+N} (\delta s_k + \hat{s}_k- \bar{s}_k)^T Q_k (\delta s_k + \hat{s}_k - \bar{s}_k) + (\delta a_k + \hat{a}_k - \bar{a}_k)^T R_k (\delta a_k + \hat{a}_k - \bar{a}_k)  \\
&\,\,\,\,\,\,\text{s.t.}\,\quad\quad \delta s_{k+1} = A_k \delta s_k + B_k \delta a_k, \, \text{and}\,\, \delta {s}_k + \hat{s}_k \in \mathcal{S},\, \delta{a}_k +\hat{a}_k \in  \mathcal{A}, \quad \forall\, k \in [t, t+N].
\end{aligned}
\label{mpc-final}
\end{equation}
Note that the optimization is over the action deviations $\{\delta a_k\}_{k=t}^{t+N}$. Once this problem is solved, we use the update rule $\hat{s}_{t} \gets \hat{s}_{t} + \delta s_t$, $\hat{a}_{t} \gets \hat{a}_{t} + \delta a_t$. The dynamics are then relinearized, and this is iterated until convergence. Because we use iterative linearization to solve the nonlinear program, it is necessary to choose a dynamics representation $\hat{f}$ that is efficiently linearizable. In our experiments, we use an analytical nonlinear dynamics representation for which the linearization can be computed analytically (see \cite{webb2013kinodynamic} for details), as well as fit a time-varying linear model. Choices such as, e.g., a Gaussian process representation, may be expensive to linearize.
%\vspace{-20pt}
\section{\algoName: Analysis}
\label{analysis}
%\vspace{-10pt}

% \todo{fix notation to match}
The following section develops the main theoretical analysis of this study. We will first show that \algoName results in bounded deviation from the nominal trajectory $\tau$ under a set of technical assumptions. This result is then used to show that the deviation between cumulative reward of the realized rollout on the target system and the cumulative reward of the nominal trajectory on the source environment, is upper bounded. This is to say, the decrease in performance below the ideal case is bounded.

%------------------
% Safety result
%------------------
%\vspace{-20pt}
\subsection{Safety Analysis in \algoName}
%\vspace{-10pt}

Using the notation from Eq~\eqref{mpc-primary}, let us denote the solution at time $k$ as $C_N^*(s_k,k)$ for MPC horizon $N$. This is the minimum cost associated with the finite horizon problem that is solved iteratively in the MPC framework. Note that this problem is solved with the approximate dynamics model; in the case where the approximate dynamics model exactly matches the target environment model, the solution to this problem would have value zero as the trajectory would be tracked exactly. We denote by $\kappa_N(s_k)$ the action at time $k$ from the solution to the MPC problem. Then, let $L_d(k) \triangleq \{s \mid C_N^*(s,k) \leq d \}$ denote the level set of the cost function for some value $d \in (0,c)$ (for some constant $c$; see \cite{mayne2011tube}) at time $k$.

We assume the error between approximate dynamics representation $\hat{f}$ and the simulator dynamics $f$ is outer approximated by a compact, convex set $\mathcal{D}$ that contains the origin. Therefore, for all state, action pairs $(s,a) \in \mathcal{S} \times \mathcal{A}$, $f(s,a) - \hat{f}(s,a) \in \mathcal{D}$. In the case where the state and action spaces are bounded, there always exists an outer approximation which satisfies this assumption. However, in practice, it is likely considerably smaller than this worst case. 

Let $\mathcal{T}_s(s_0) \triangleq \{L_d(k) \mid k \in \mathbb{Z}_{\geq0} \}$ denote a state tube defined by the time-dependent level sets of the auxiliary cost function. We may now state our first result, noting that the auxiliary stabilizing policy $\kappa_N$ is the result of the MPC optimization problem relying solely on the approximate dynamics $\hat{f}$.

\begin{theorem}
Every state trajectory $\{s_t\}_{t=0}^T$ generated by the target dynamics $s_{t+1} = f(s_t,\kappa_N(s_t)) + w_t$ with initial state $s_0$, lies in the state tube $\mathcal{T}_s(s_0)$.
\end{theorem}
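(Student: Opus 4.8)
The plan is to prove this by induction on $t$, showing that if $s_t \in L_{d}(t)$ for some appropriate $d$, then the next state $s_{t+1} = f(s_t, \kappa_N(s_t)) + w_t$ lies in $L_{d}(t+1)$, so that the whole trajectory stays within the nested family of level sets comprising $\mathcal{T}_s(s_0)$. The base case is $s_0$, which by definition sits in $L_{d}(0)$ for $d$ at least $C_N^*(s_0,0)$ (and if $s_0 = \bar s_0$ this cost is zero). The crux is the inductive step, and this is exactly where the two error sources enter: the additive target noise $w_t \in \mathcal{W}$ and the model mismatch $f - \hat f \in \mathcal{D}$.

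The key step is a value-function decrease argument adapted from tube-based MPC \cite{mayne2011tube}. First I would write the realized successor state as a perturbation of the state the \emph{nominal} MPC model would have predicted: $s_{t+1} = \hat f(s_t, \kappa_N(s_t)) + e_t$, where $e_t = \big(f(s_t,\kappa_N(s_t)) - \hat f(s_t,\kappa_N(s_t))\big) + w_t \in \mathcal{D} \oplus \mathcal{W}$, a compact convex set containing the origin. Under the nominal (disturbance-free, exact-model) dynamics, the standard MPC argument — shifting the optimal control sequence by one step and appending an admissible terminal action — gives a bound of the form $C_N^*\big(\hat f(s_t,\kappa_N(s_t)), t+1\big) \le C_N^*(s_t,t) - \ell(s_t,\kappa_N(s_t))$, i.e. the optimal cost is non-increasing along nominal trajectories. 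Then I would invoke Lipschitz continuity of $C_N^*$ in its state argument (which follows from the $C^2$ dynamics $\hat f$, positive-definite stage costs, and compactness of $\mathcal{S},\mathcal{A}$, possibly after the state/action-space restriction to $\mathcal{S}', \mathcal{A}'$) to absorb the perturbation $e_t$: $C_N^*(s_{t+1}, t+1) \le C_N^*\big(\hat f(s_t,\kappa_N(s_t)),t+1\big) + L_C \|e_t\|$. Combining, $C_N^*(s_{t+1},t+1) \le C_N^*(s_t,t) - \ell(s_t,\kappa_N(s_t)) + L_C \sup_{e \in \mathcal{D}\oplus\mathcal{W}}\|e\|$, and for $d$ chosen large enough (and $< c$ so the level set stays in the region where these properties hold) the negative stage-cost term dominates the bounded perturbation term whenever $C_N^*(s_t,t) = d$, so the level set $L_d(\cdot)$ is forward invariant. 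This closes the induction and places every $s_t$ in $\mathcal{T}_s(s_0)$.

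The main obstacle I anticipate is making the level-set $d$ and the constant $c$ precise and mutually consistent: one needs $d$ small enough that $L_d(k)$ stays inside $\mathcal{S}'$ (so the MPC feasibility and the Lipschitz/monotonicity estimates are valid uniformly in $k$) and simultaneously large enough that $L_C \sup\|e\|$ is controlled by the stage-cost decrease on the boundary of $L_d$. This is essentially the ``robust positively invariant tube'' construction, and the clean statement here presumes the existence of such a $d$; rather than re-derive the full machinery I would cite \cite{mayne2011tube} for the existence of $c$ and the requisite regularity, and restrict attention to $d \in (0,c)$. A secondary technical point is justifying Lipschitz continuity of $C_N^*(\cdot, k)$ uniformly in $k$ — for a time-varying horizon-$N$ problem with time-varying $Q_k, R_k$ this needs the cost matrices to be uniformly bounded above and below, which I would state as a standing assumption. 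Everything else (the one-step shift of the control sequence, the set-algebra with $\mathcal{D} \oplus \mathcal{W}$) is routine.
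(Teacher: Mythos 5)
Your proposal is correct and follows essentially the same route as the paper: the key move in both is to lump the model mismatch and the target noise into a single combined disturbance $e_t \in \mathcal{D} \oplus \mathcal{W}$, observe that this set is compact, convex, and contains the origin, and then invoke Theorem~1 of \cite{mayne2011tube} with $\mathbb{W}$ replaced by this Minkowski sum. The paper stops at the citation, whereas you additionally sketch the internals of the cited result (the one-step cost-decrease, Lipschitz absorption of the perturbation, and level-set invariance), which is consistent with the machinery being invoked.
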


\begin{proof}
Note that $\mathcal{W} + \mathcal{D}$, where the addition denotes a Minkowski sum, is compact, convex, and contains the origin. Then, the result follows from Theorem 1 of \cite{mayne2011tube} by replacing the set of disturbances (which the authors refer to as $\mathbb{W}$) with $\mathcal{W} + \mathcal{D}$.
\end{proof}

The above result combined with Proposition 2i of \cite{mayne2011tube}, which shows that for some constant $c_1$, $C^*_N(s_k,k) \geq c_1 \| s_k - \bar{s}_k \|^2$, gives insight into the safety of \algoName. In particular, note that for an arbitrarily long trajectory, the realized trajectory stays in a region around the nominal trajectory despite using an inaccurate dynamics representation in the MPC optimization problem. While this result shows that the deviation from the nominal trajectory is bounded, it does not allow construction of explicit tubes in the state space, and thus can not be used directly for guarantees on obstacle avoidance. Recent work by Singh et al. \cite{singh2017robust} establishes tubes of this form, and this is thus a promising extension of the \algoName framework. 

%\vspace{-20pt}
\subsection{Robustness Analysis in \algoName}
%\vspace{-10pt}

We will now show that due to the boundedness of state deviation, the deviation in the total accrued reward over a rollout on the target system is bounded. Let $V_S^\pi(s)$ and $V_T^\kappa(s)$ denote the value functions associated with some state $s$ and the primary policy executed on the source environment, and the \algoName policy on the secondary environment respectively. 

\begin{theorem}
Under the technical assumptions made in Section 3 and 5.1, $|V_T^{\kappa}(s_0) - V_S^\pi(s_0)| \leq c_2 \sum_{t=0}^T \sqrt{C^*_N(s_t,t)}$, where $c_2$ is some constant and $s_{t+1} = f(s_t, \kappa_N(s_t)) + w_t$. 
\end{theorem}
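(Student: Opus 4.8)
The plan is to bound the reward gap term-by-term along the horizon: convert each per-step reward discrepancy into a bound on the state and action deviations from the nominal trajectory, and then use the quadratic lower bound on the MPC cost-to-go to express those deviations in terms of $\sqrt{C_N^*(s_t,t)}$.

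First I would expand both value functions as finite sums of stage rewards. Since the source dynamics $f$ and the policy $\pi$ are deterministic, $V_S^\pi(s_0) = \sum_{t=0}^T r(\bar s_t, \bar a_t)$ along the nominal rollout $\tau$, while $V_T^\kappa(s_0) = \sum_{t=0}^T r(s_t, \kappa_N(s_t))$ along the realized rollout on $\mathcal M_T$; by Theorem~1 the realized states remain in the tube $\mathcal T_s(s_0)$, so $C_N^*(s_t,t)$ is well defined (and finite) at every step. The triangle inequality gives $|V_T^\kappa(s_0) - V_S^\pi(s_0)| \le \sum_{t=0}^T |r(s_t,\kappa_N(s_t)) - r(\bar s_t,\bar a_t)|$, and Lipschitz continuity of $r$ (assumed in Section~3), with constant $L_r$ on $\mathcal S \times \mathcal A$, bounds each summand by $L_r(\|s_t - \bar s_t\| + \|\kappa_N(s_t) - \bar a_t\|)$ up to a norm-equivalence constant.

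Next I would bound the two deviations by $\sqrt{C_N^*(s_t,t)}$. For the state deviation, Proposition~2i of \cite{mayne2011tube} gives $C_N^*(s_t,t) \ge c_1\|s_t - \bar s_t\|^2$, so $\|s_t - \bar s_t\| \le c_1^{-1/2}\sqrt{C_N^*(s_t,t)}$. For the action deviation, $\kappa_N(s_t)$ is the first element of the control sequence minimizing the problem in Eq.~\eqref{mpc-primary} at time $t$; since every stage cost in that objective is nonnegative and the $k=t$ stage already contributes $(\kappa_N(s_t)-\bar a_t)^T R_t(\kappa_N(s_t)-\bar a_t) \ge \lambda_{\min}(R_t)\|\kappa_N(s_t)-\bar a_t\|^2$, we get $\|\kappa_N(s_t)-\bar a_t\| \le \lambda_{\min}(R_t)^{-1/2}\sqrt{C_N^*(s_t,t)}$. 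Collecting the constants into a single $c_2$ (depending on $L_r$, $c_1$, the $R_t$, and the chosen norm) and summing over $t$ yields $|V_T^\kappa(s_0) - V_S^\pi(s_0)| \le c_2\sum_{t=0}^T\sqrt{C_N^*(s_t,t)}$; the same steps in expectation over $\{w_t\}$ give the corresponding statement for expected returns.

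The main obstacle I anticipate is handling the action deviation cleanly: one must confirm that $C_N^*$ as written in Eq.~\eqref{mpc-primary} includes the $k=t$ stage cost (it does, as written) and that $\kappa_N$ is exactly that first optimized action, so the quadratic lower bound transfers without appealing to continuity of the (implicit) MPC control map, which would be a stronger and less transparent assumption. A secondary subtlety is that the stated inequality is really a pathwise bound: both sides are random through the realized $s_t$, so I would phrase it for each disturbance realization before optionally taking expectations, and note that it degenerates only if $C_N^*$ fails to stay finite along the trajectory, which Theorem~1 precludes.
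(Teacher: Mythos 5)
Your proposal is correct and follows essentially the same route as the paper's proof: a triangle-inequality decomposition into per-step reward gaps, the Lipschitz bound $|r(s,a)-r(\bar s,\bar a)|\le L_r(\|s-\bar s\|+\|a-\bar a\|)$, Proposition~2i of the tube-MPC reference for the state deviation, and the minimum eigenvalue of $R_t$ (the paper's $c_3$) extracted from the $k=t$ stage cost for the action deviation. Your writeup is in fact somewhat more explicit than the paper's about why the action-deviation bound holds and about the pathwise nature of the inequality, but there is no substantive difference in approach.
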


\begin{proof}
First, note $|V_{\kappa}(s_0) - V_\pi(s_0)| \leq \sum^T_{t=0} |r(s_t,\kappa_N(s_t)) - r(\bar{s}_t, \pi(\bar{s}_t))|$, where $s_{t+1} = f(s_t,\kappa_N(s_t)) + w_t$ and $\bar{s}_{t+1} = f(\bar{s}_t,\pi(\bar{s}_t))$. Additionally, letting $a = \kappa_N(s)$ and $\bar{a} = \pi(\bar{s})$, note that similarly to Proposition 2i of \cite{mayne2011tube}, we can establish a bound on the action deviation from the nominal trajectory in terms of the auxiliary cost function, $C^*(s_t,t) \geq c_3 \|a_t - \bar{a}_t\|^2$ for all $t$ (where the norm is in the Euclidean sense), by taking $c_3$ as the minimum eigenvalue of $R_t$. By the Lipschitz continuity of the reward function, and writing the Lipschitz constant of the reward function $L_r$, we have 
\begin{equation}
    \label{rew_bound}
    |r(s,a) - r(\bar{s},\bar{a})| \leq L_r ( \|s - \bar{s}\| + \|a - \bar{a}\| ).
\end{equation}
Then, noting that the quadratic auxiliary cost function $C^*_N$ is always positive, the result is proved by applying Proposition 2i of \cite{mayne2011tube} and the bound on action deviation from the nominal to the right hand side of Equation \ref{rew_bound}.
\end{proof}

This result may then be restated in terms of the disturbance sets. Let $\|\mathcal{W} + \mathcal{D}\| \triangleq \max_{w \in \mathcal{W},\, d \in \mathcal{D}} \|w + d\|$.

\begin{theorem}
Under the same technical assumptions as Theorem 2, the following inequality holds for some constant $c_4 > 0$:
\begin{equation}
    |V_T^{\kappa}(s_0) - V_S^\pi(s_0)| \leq c_4 T \sqrt{\| \mathcal{W} + \mathcal{D}\|}
\end{equation}
\end{theorem}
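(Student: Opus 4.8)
The plan is to chain Theorem~2 with a \emph{uniform} bound on the per-step optimal auxiliary cost: I would show there is a constant $c_5$ such that $C^*_N(s_t,t) \le c_5 \,\|\mathcal{W}+\mathcal{D}\|$ for every $t$ along the realized target rollout $s_{t+1} = f(s_t,\kappa_N(s_t)) + w_t$, with $c_5$ independent of $t$ and of the horizon $T$. Granting this, Theorem~2 gives immediately
\begin{equation*}
|V_T^{\kappa}(s_0) - V_S^\pi(s_0)| \le c_2 \sum_{t=0}^T \sqrt{C^*_N(s_t,t)} \le c_2\,(T+1)\sqrt{c_5}\,\sqrt{\|\mathcal{W}+\mathcal{D}\|},
\end{equation*}
and absorbing $c_2$, $\sqrt{c_5}$ and the factor $(T+1)/T$ into a single $c_4 > 0$ yields the claim. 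So the entire content of the proof is the uniform bound on $C^*_N$.

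I would obtain that bound in two moves. First, Theorem~1 already confines every realized state $s_t$ to the tube $\mathcal{T}_s(s_0)$, and by construction (following \cite{mayne2011tube}) the cross-section of this tube at time $t$ is a sublevel set of $C^*_N(\cdot,t)$ whose level is governed by the magnitude of the aggregate disturbance set $\mathcal{W}+\mathcal{D}$ --- this is exactly the Minkowski-sum substitution used in the proof of Theorem~1, now tracked quantitatively rather than qualitatively. Second, to pin down the scaling of that level I would exhibit an explicit \emph{feasible, suboptimal} control sequence for the MPC problem \eqref{mpc-primary} started from $s_t$ --- for instance the shifted nominal action sequence together with the stabilizing feedback of \cite{mayne2011tube} --- and bound the cost it induces. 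Since the objective in \eqref{mpc-primary} is a sum of $N+1$ quadratic stage terms in the state and action deviations, and along a tube trajectory each such deviation is controlled by a single realized disturbance step of norm at most $\|\mathcal{W}+\mathcal{D}\|$ propagated through the (contractive) closed-loop linearized dynamics, the total is bounded by a constant times $\|\mathcal{W}+\mathcal{D}\|$, with the constant depending only on $Q_k$, $R_k$, $N$, and the contraction rate of the closed loop --- crucially not on $T$. (A cruder propagation estimate would instead give a bound proportional to $\|\mathcal{W}+\mathcal{D}\|^2$, which is even tighter for large disturbances; either suffices, and I would present whichever makes the constant-tracking cleanest.)

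The main obstacle I anticipate is precisely this last estimate: one must verify that a single disturbance injection remains uniformly small after being pushed through the time-varying, iteratively-relinearized dynamics over the MPC window, so that the quadratic stage costs neither accumulate unboundedly over the window nor make $c_5$ grow with $T$. This is where the robust positive invariance of the tube cross-sections and the continuity of the value function established in \cite{mayne2011tube} do the heavy lifting; everything else --- the displayed chaining above and the absorption of constants --- is routine. I would also note explicitly that the bound is deliberately loose and is intended only to certify graceful degradation: it is linear in $T$ and monotone in the disturbance magnitude, and it recovers $V_T^{\kappa}(s_0) = V_S^\pi(s_0)$ in the limit $\mathcal{W}+\mathcal{D} \to \{0\}$, consistent with the informal discussion following Theorem~1.
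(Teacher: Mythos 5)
Your proposal matches the paper's argument: the paper proves this statement simply by combining Theorem~2 with Proposition~4(ii) of \cite{mayne2011tube}, and that cited proposition is exactly the uniform per-step bound $C^*_N(s_t,t) \leq c_5 \|\mathcal{W}+\mathcal{D}\|$ you set out to establish (your feasible-suboptimal-sequence argument is the standard way that proposition is proved). One small caveat: your parenthetical alternative bound proportional to $\|\mathcal{W}+\mathcal{D}\|^2$ would yield a final bound linear in $\|\mathcal{W}+\mathcal{D}\|$ rather than the stated $\sqrt{\|\mathcal{W}+\mathcal{D}\|}$, so only the linear version of the per-step bound recovers the theorem exactly as written.
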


\begin{proof}
The result follows from combining Theorem 2 with Proposition 4ii of \cite{mayne2011tube}.
\end{proof}

%\todo{AG: I think this is a useful result in the form of bouding the deviation, but at the same time the bound very loose in the sense that the right hand side is basically the sum of the max cost of the action deviation due to the auxiliarry controller. Another problem that I have is that the $C^*_N$ is not a fixed quantity as compared to value function bounds (from previous analysis). For instance the analysis below uses $|r|_{\infty}$ and bounds on deviation of the dynamics directly, both of which are not controller dependent but only depend on the problem instance.}

These results shows that along with guarantees on spatial deviation from the nominal trajectory, we may also establish bounds on the accrued reward relative to what is received with the nominal policy in the source environment, in effect demonstrating that zero-shot transfer is possible. The Lipschitz continuity of the reward function is essential to this result, and this illustrates several aspects of the policy transfer problem.

The \algoName algorithm is based on tracking a nominal rollout in simulation. Critical in the success of this approach is gradual variation of the reward function. Sparse reward structures are likely to fail with this approach to transfer, as tracking the nominal trajectory, even relatively closely, may result in poor reward. On the other hand, a slowly varying reward function, even if tracked relatively roughly may result in accrued reward close to the nominal rollout on the source environment. 

\section{Experimental Evaluation}
%\vspace{-10pt}

\begin{figure}[t]
    \vspace{-5pt}
        \centering  
    \begin{subfigure}[t]{0.5\textwidth}
        \centering
        \hspace{2.35mm}
        \includegraphics[scale=0.076]{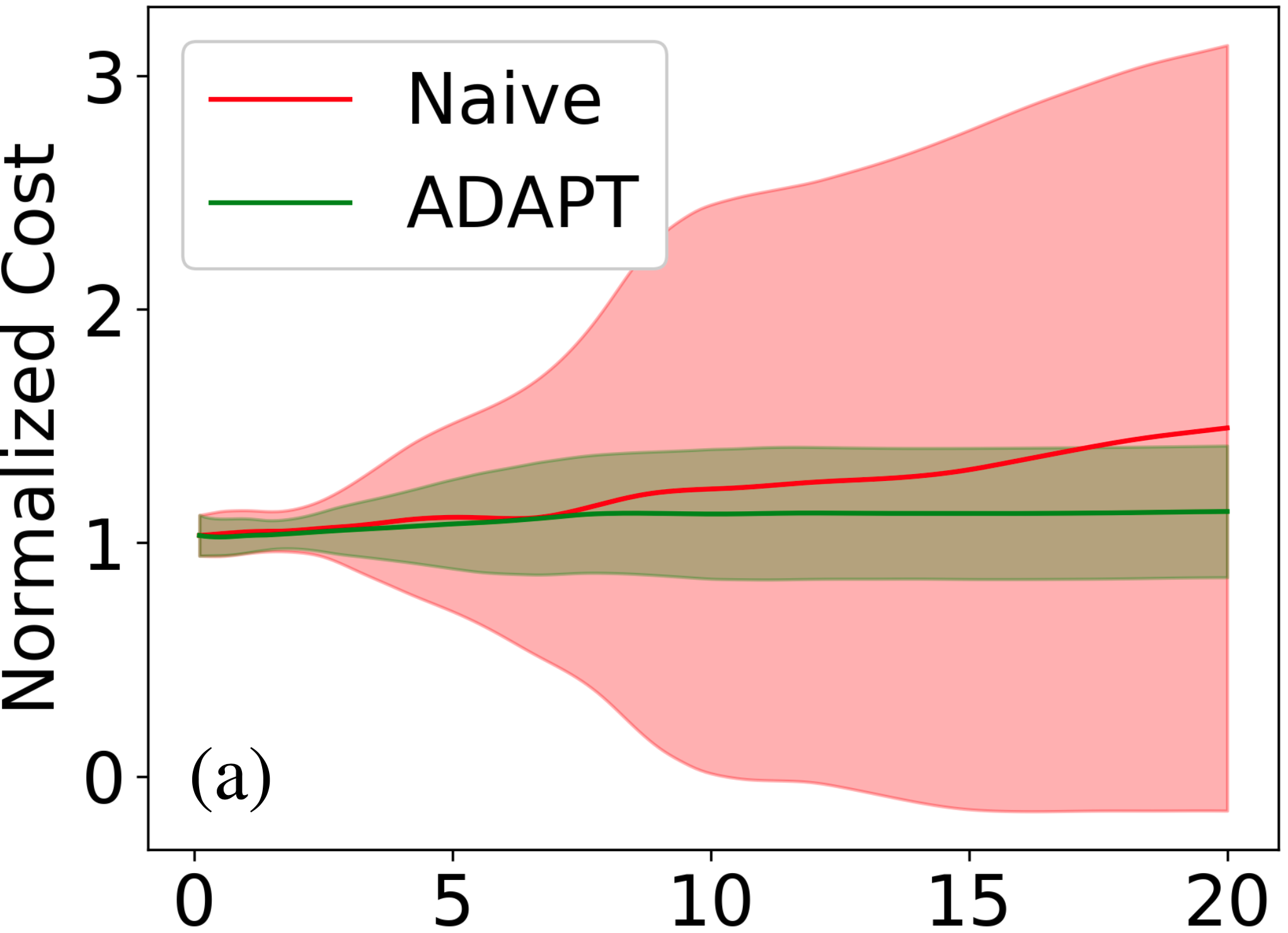}
    \end{subfigure}%
    \begin{subfigure}[t]{0.5\textwidth}
        \centering
        \hspace{-2.5mm}
        \includegraphics[scale=0.076]{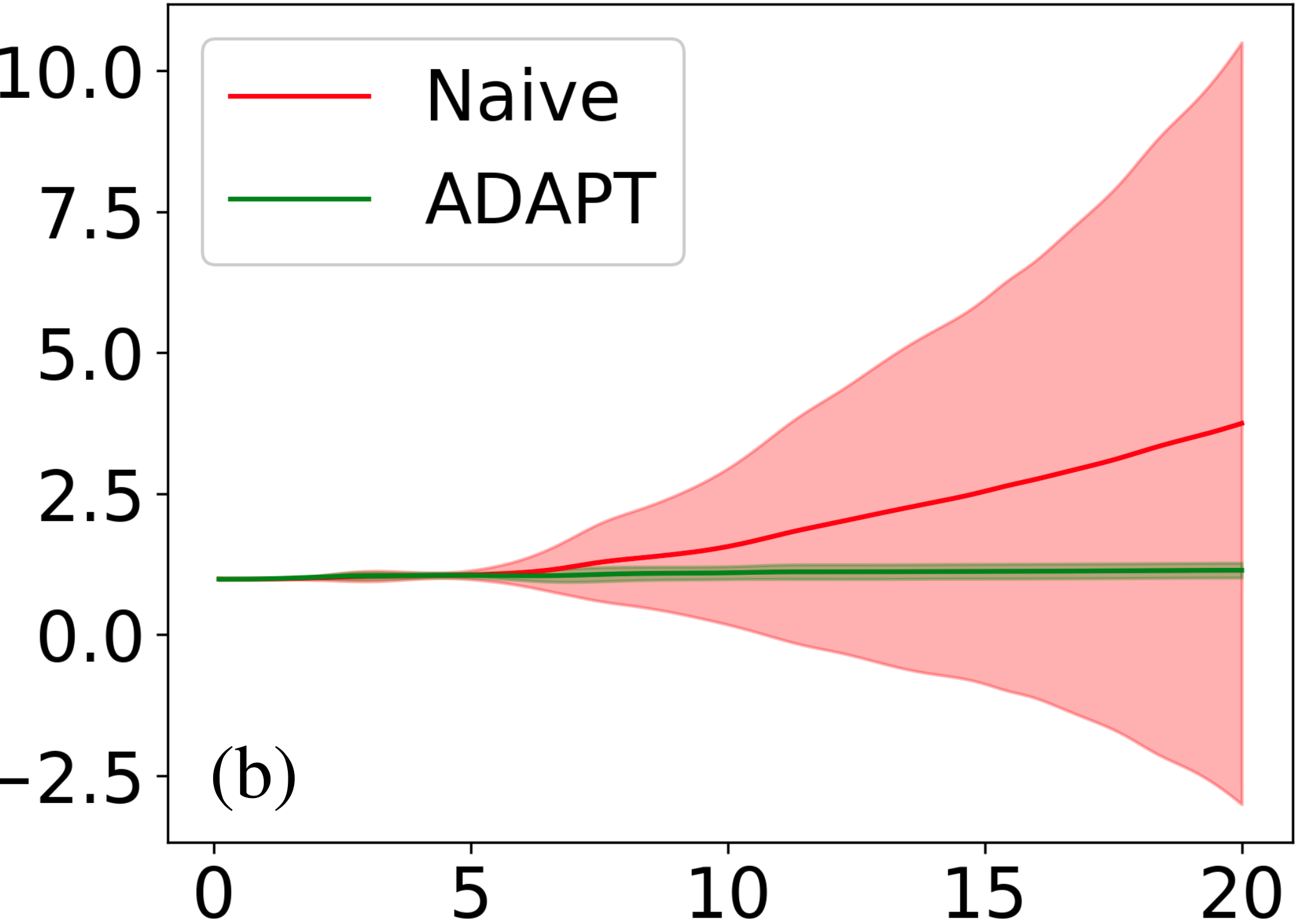}
    \end{subfigure}%
    \vspace{1mm}
    
    \begin{subfigure}[t]{0.5\textwidth}
        \centering
        \includegraphics[scale=0.076]{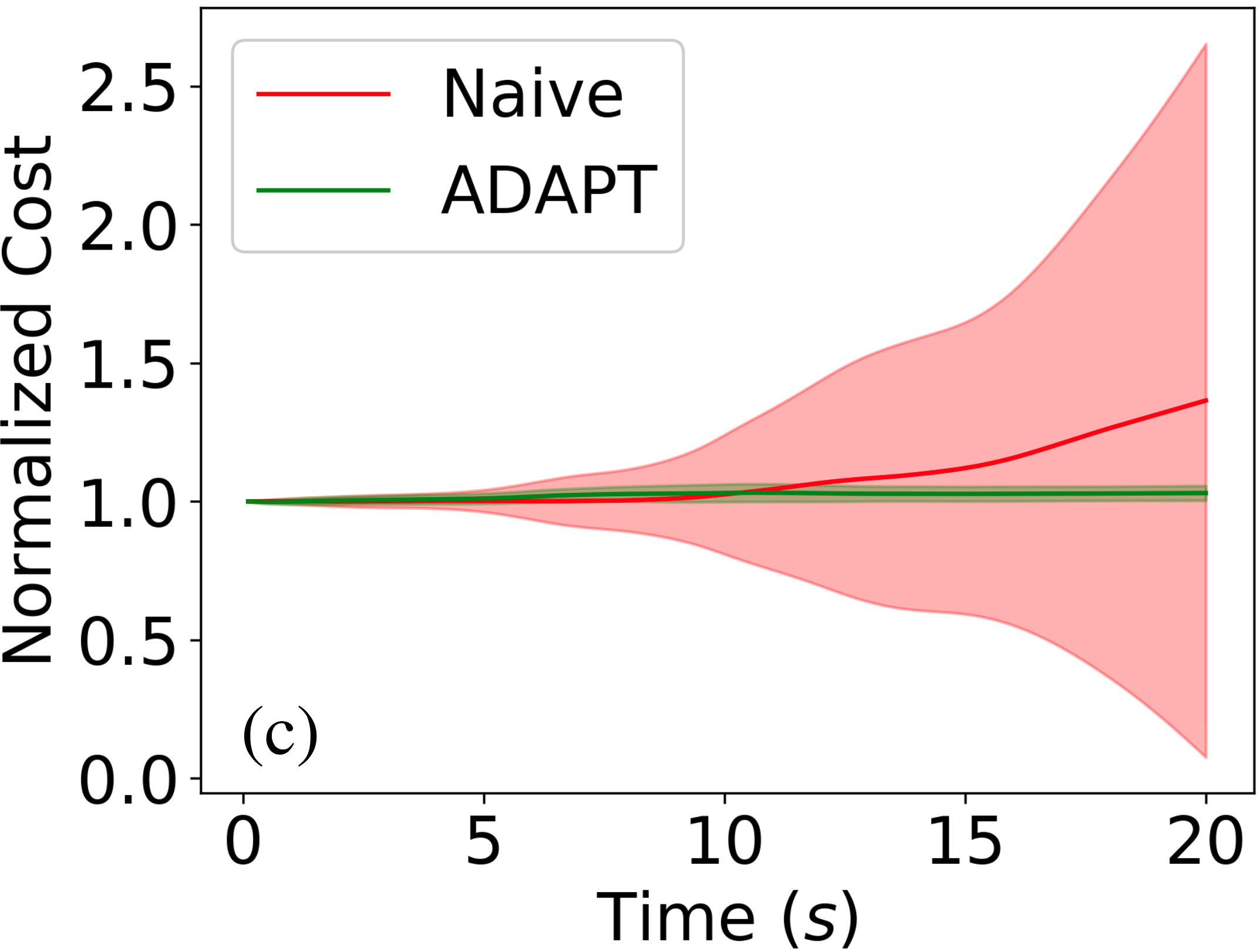}
    \end{subfigure}%
    \hspace{-0.8mm}
    \begin{subfigure}[t]{0.5\textwidth}
        \centering
        \includegraphics[scale=0.076]{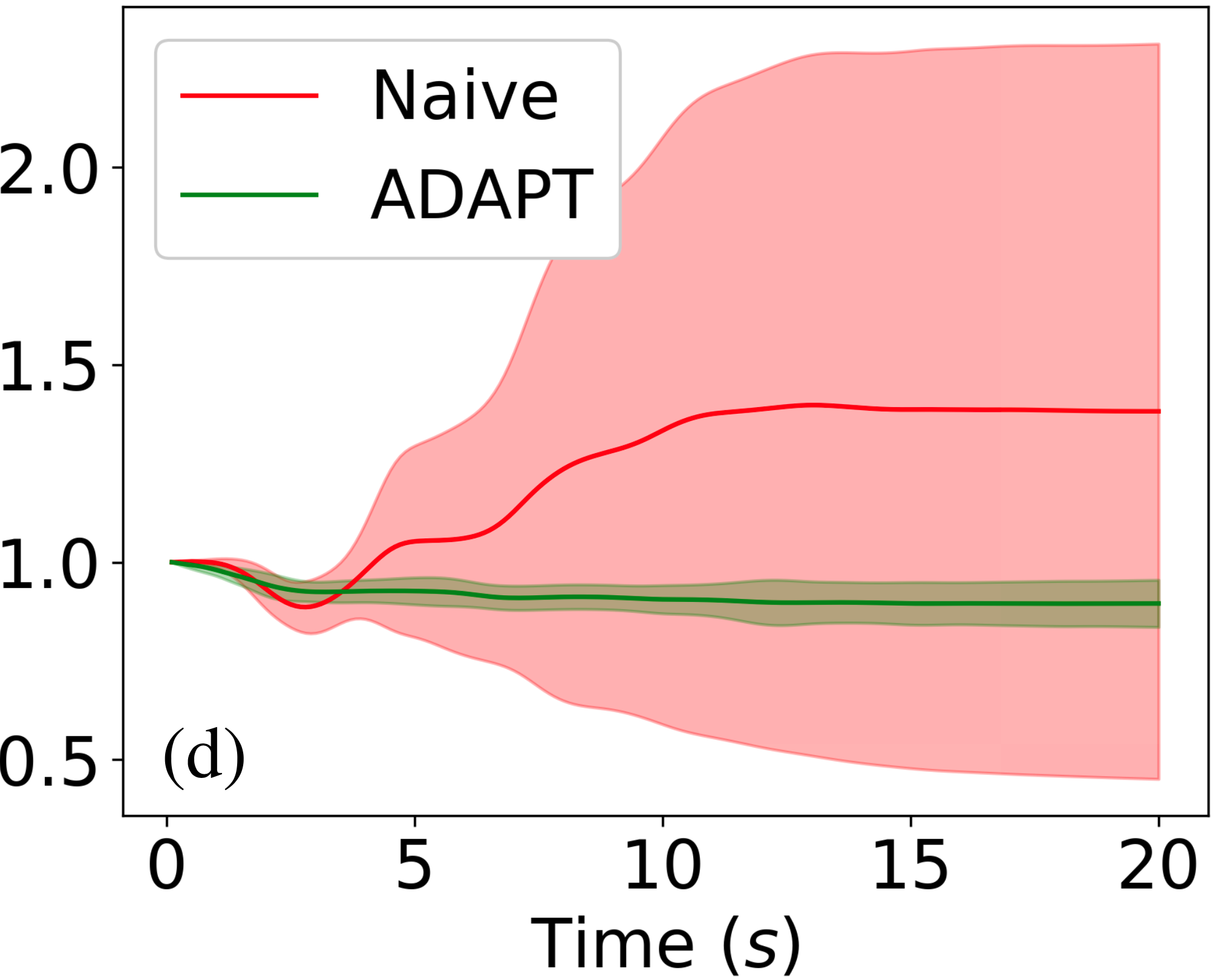}
    \end{subfigure}   
    \vspace{-10pt}
    \caption{Mean cumulative cost over the length of an episode for 50 episodes on the kinematic car environment. The confidence intervals are standard error. The costs are normalized to the cost of the naive policy being rolled out on the simulated environment from the same initial state, to allow more direct comparison across episodes. The \textit{naive} rollout is the nominal policy executed on the target environment. The disturbances tested are \textbf{a)} a hill landscape, \textbf{b)} additive control error, \textbf{c)} process noise, and \textbf{d)} dynamics parameter error.}
    
    \label{fig:car_cost}
    %\vspace{-15pt}
\end{figure}

We implemented \algoName\ on a nonlinear, non-holonomic kinematic car model with a 5-dimensional state space as well as on the \texttt{Reacher} environment in OpenAI's Gym \cite{brockman2016}. 
%The cost (negative reward) function was the squared distance of the car to the origin, with a small cost for control actions. Optimal policies learned to drive toward the origin. Details of this experiment are available in the supplementary materials.
We train policies using Trust Region Policy Optimization (TRPO) \cite{schulman2015trust}. The policy is parameterized as a neural network with two hidden layers, each with 64  units and ReLU nonlinearities.
In all of our experiments, we report \textit{normalized cost}. This is the cost (negative reward) realized by a trial in the target environment, divided by the cost of the nominal policy rolled out on the simulated environment from the same initial state. This allows more direct comparison between episodes for environments with stochastic initial states. We generally compare the \textit{naive} trial, which is the nominal policy rolled out on the target environment (e.g., standard transfer with no adaptation) to \algoName.

% \subsection{Car Model}
%\vspace{-10pt}
\subsection{Environment I: 5-D Car}
%\vspace{-10pt}

\begin{figure}[t]
    \vspace{-5pt}
    \centering
        %~
    % \begin{subfigure}{0.5\textwidth}
    %     \centering
    %     \includegraphics[scale=0.38]{figures/multiple_mpc}
    %     \caption{}
        
    %     \label{fig:hill}
    % \end{subfigure}   
    \begin{subfigure}{0.31\textwidth}
        \centering
        \includegraphics[scale=0.092]{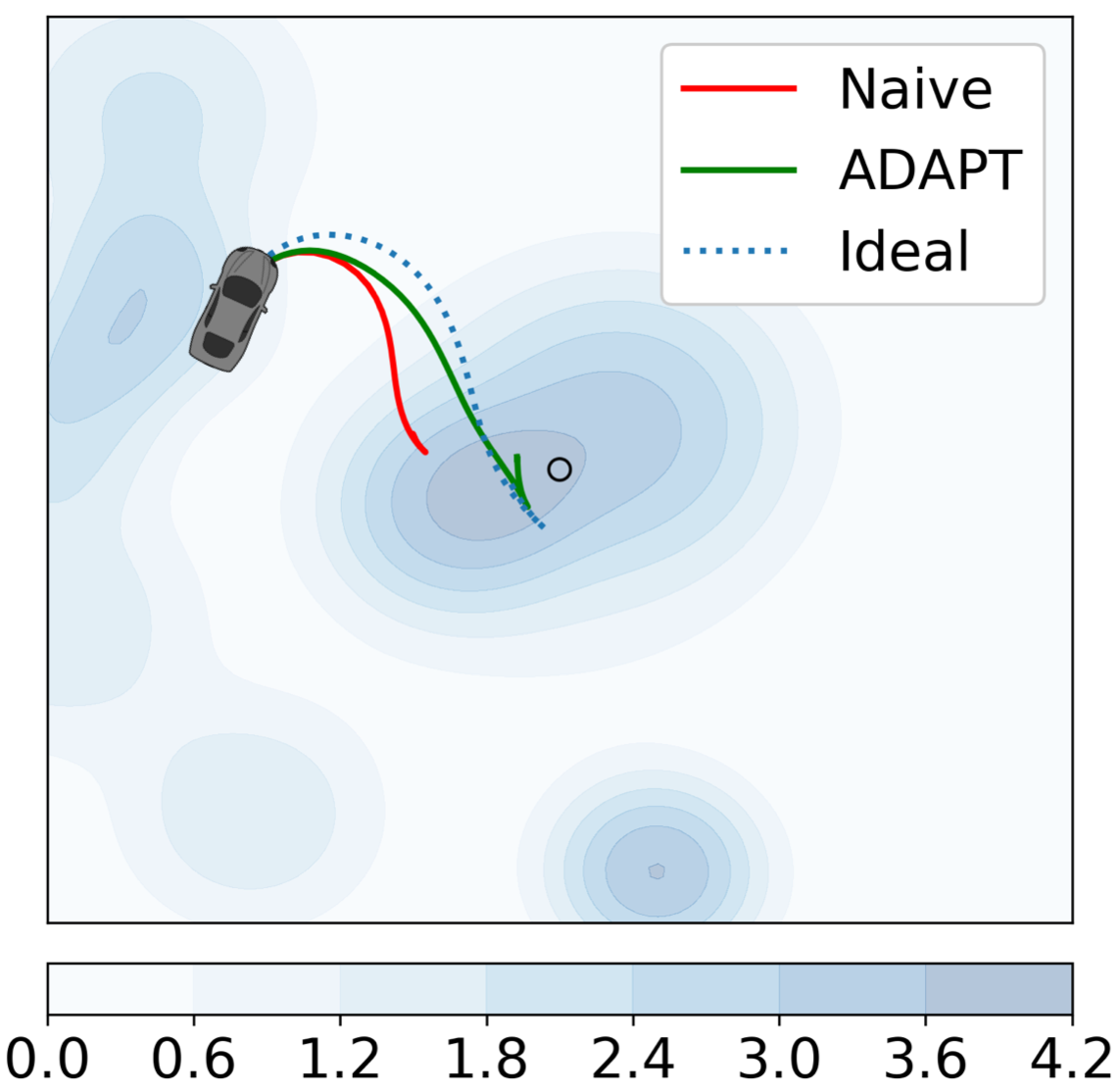}
        \caption{}
        
        \label{fig:hill}
    \end{subfigure}% 
    \begin{subfigure}{0.33\textwidth}
        \centering
        \includegraphics[scale=0.3]{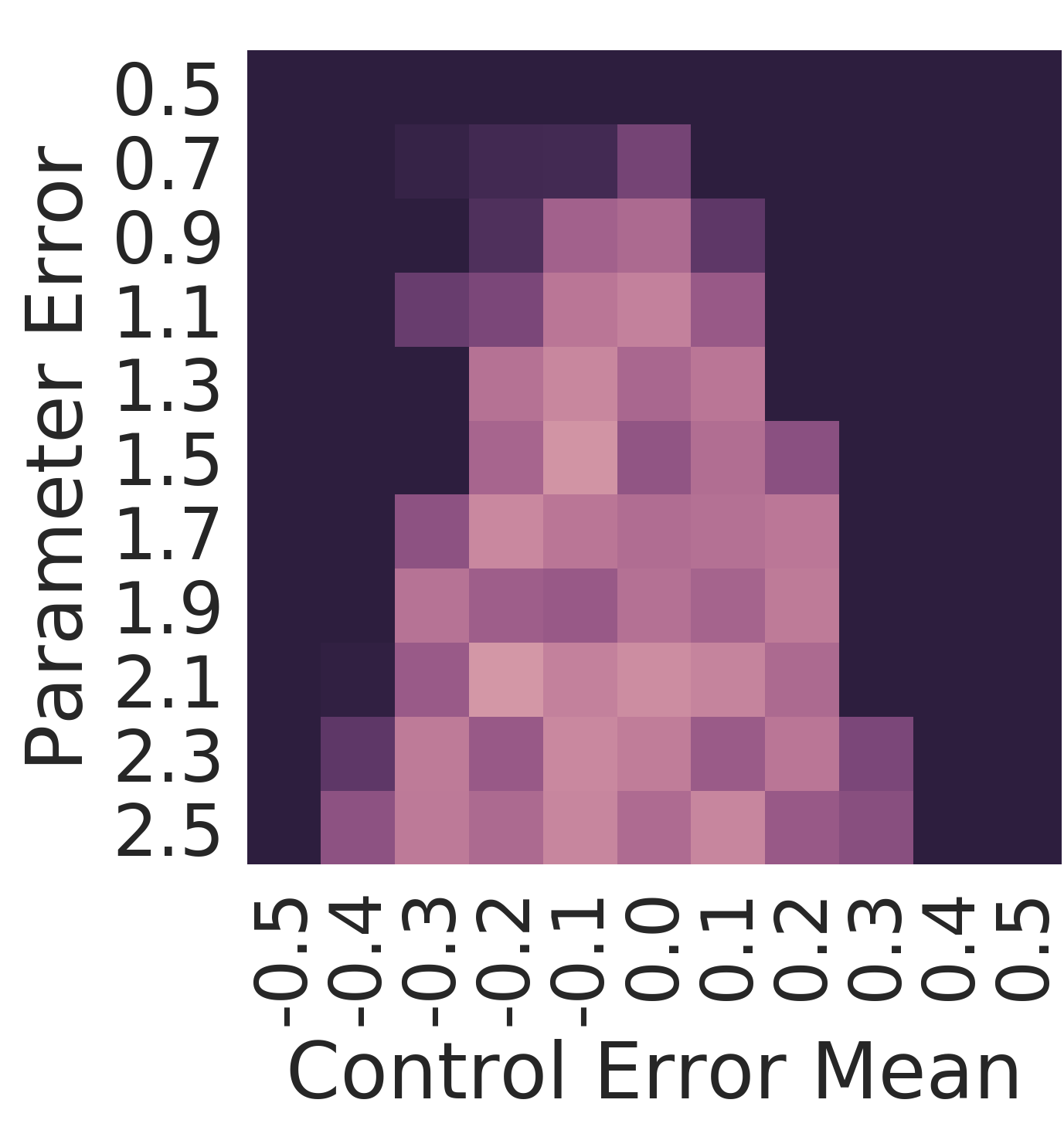}
        \caption{}
        
        \label{fig:heatmap_naive}
    \end{subfigure}%
    %~ 
    \begin{subfigure}{0.33\textwidth}
        \centering
        \includegraphics[scale=0.3]{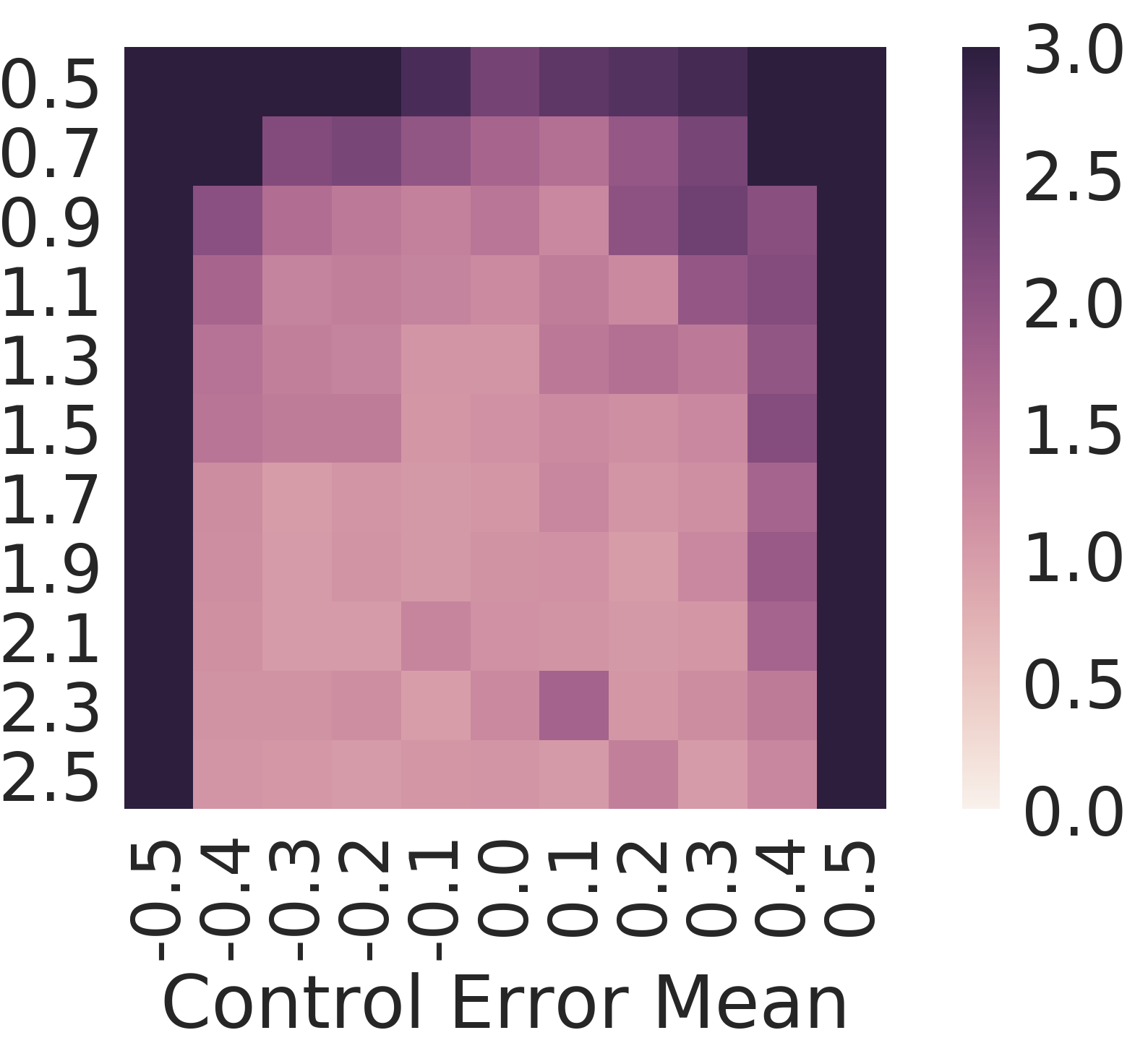}
        \caption{}
        \label{fig:heatmap_mpc}
    \end{subfigure}%
    \vspace{-10pt}
    \caption{\textbf{a)} The car environment with the paths for the ideal case (nominal policy on simulated environment), the naive case (nominal policy on the target environment), and the \algoName case (\algoName on the target environment). The contour plot shows the height of the added hills. Figures (b) and (c) show the normalized cost for varying disturbances due to additive control error and dynamics parameter error for \textbf{b)} the naive case and \textbf{c)} \algoName (lower is better). In addition to the listed disturbances, disturbances due to hills are also added for all trials. Each grid cell is the mean of 50 trials.}
    %\vspace{-5pt}
\end{figure}

We implemented \algoName\ on a nonlinear, nonholonomic 5-dimensional kinematic car model that has been used previously in the motion planning literature \cite{webb2013kinodynamic}. Specifically, the car has state $s = \lbrack x, y, \theta, v, \kappa \rbrack^T$, where $x$ and $y$ denote coordinates in the plane, $\theta$ denotes heading angle, $v$ denotes speed, and $\kappa$ denotes trajectory curvature. The system has dynamics $\dot{s} = \lbrack v \cos \theta, v \sin \theta, v \kappa, a_v, a_\kappa \rbrack$, where $a_v \in \lbrack -2, 2 \rbrack$ and $a_\kappa \in \lbrack -0.5, 0.5 \rbrack$ are the controlled acceleration and curvature derivative.  The policy is trained to minimize the quadratic cost $L(\bm{s}, \bm{a}) = \sum^T_{t=0} \ell(s_t,a_t)$, where $\ell(s_t,a_t) = x_t^2 + y_t^2 + a_{v,t}^2 + a_{\kappa,t}^2$, which results in policies that drive to the origin. In each trial, the vehicle is initialized in a random state, with position $x,y \in [-5,5]$, with random heading and zero velocity and curvature. 

Our auxiliary controller used an MPC horizon of 2 seconds (20 timesteps). Our state deviation penalty matrix, $Q$, has value 1 along the diagonal for the position terms, and zero elsewhere. Thus, the MPC controller penalizes only deviation in position. The matrix $R$ had small terms ($10^{-3}$) along the diagonal to slightly penalize control deviations. In practice, this mostly acts as a small regularizing term to prevent large oscillatory control inputs by the auxiliary controller. The behavior of the auxiliary controller is dependent on the matrices $Q$ and $R$, but in practice good performance may be achieved across environments with fixed values. Because of the relatively high quadratic penalty on control in policy training, the nominal policy rarely approaches the control limits. Thus, we can set $\mathcal{A}' = \mathcal{A}$, and we set $\mathcal{S}' = \mathcal{S}$. For our dynamics model, we use the linearization reported in \cite{webb2013kinodynamic}.

%These matrices are tuning parameters that must be adjusted by system, but in our experience, good performance can be attained with little tuning. 
%Figure \ref{fig:tube} shows a comparison between the naive policy and the \algoName-stabilized policy with additive dynamics error (process noise). The \algoName trajectories are much more tightly clustered around the nominal trajectory. In this figure, 100 trials were performed. 

%\vspace{-20pt}
\subsection{Disturbance Models}
%\vspace{-10pt}

We investigate four disturbance types:
\begin{enumerate}
    \item Environmental Uncertainty: We add randomly-generated hills to the target environment such that the car experiences accelerations due to gravity. This noise is therefore state-dependent. Figure \ref{fig:hill} shows a randomly generated landscape. We randomly sample 20 hills in the workspace, each of which is circular and has varying radius and height. The vehicle experiences an additive longitudinal acceleration proportional to the landscape slope at its current location, and no lateral acceleration.
    \item Control noise: Nonzero-mean additive control error drawn from a uniform distribution.
    \item Process noise: Additive, zero-mean noise added to the state. Disturbances are drawn from a uniform distribution. 
    \item Dynamics parameter error: We add a scaling factor $\gamma$ to the control of $\dot{\kappa}$, such that $\dot{\kappa} = \gamma a_\kappa$.
\end{enumerate}
For the last three, the noise terms were drawn i.i.d. from a uniform distribution at each time $t$. These disturbances were investigated both independently (Figure \ref{fig:car_cost}) and simultaneously (Figure 2). Figure 1 shows the normalized cost of the naive transfer and \algoName for each of the four disturbances individually. 

In our experiments, \algoName substantially outperforms naive transfer, achieving normalized costs 1.5-5x smaller. Additionally, the variance of the naive transfer is considerably higher, whereas the realized cost for \algoName is clustered relatively tightly around one (e.g., approximately equal cost to the ideal case). In Figure \ref{fig:car_cost}d, the normalized cost of \algoName is actually below one, implying that the transferred policy performs better than the ideal policy. In fact, this is because the dynamics parameter error in this trial results in oversteer, and so the agent accumulates less cost to turn to face the goal than in the nominal environment. Thus, pointing toward the goal is more ``cost-efficient'' in the target environment. The performance of direct transfer and \algoName with varying parameter error may be seen in Figure \ref{fig:heatmap_naive} and Figure \ref{fig:heatmap_mpc}. In Figure \ref{fig:hill}, a case is presented where the direct policy transfer fails to make it up a hill, whereas the \algoName policy tracks the nominal trajectory well. 

%\vspace{-20pt}
\subsection{\algoName with Robust Offline Policy}
%\vspace{-10pt}

\begin{figure}[t]
    \vspace{-5pt}
    \hspace{-1.5mm}
        \centering
    \begin{subfigure}[t]{0.5\textwidth}
        \hspace{2.45mm}
        \centering
        \includegraphics[scale=0.075]{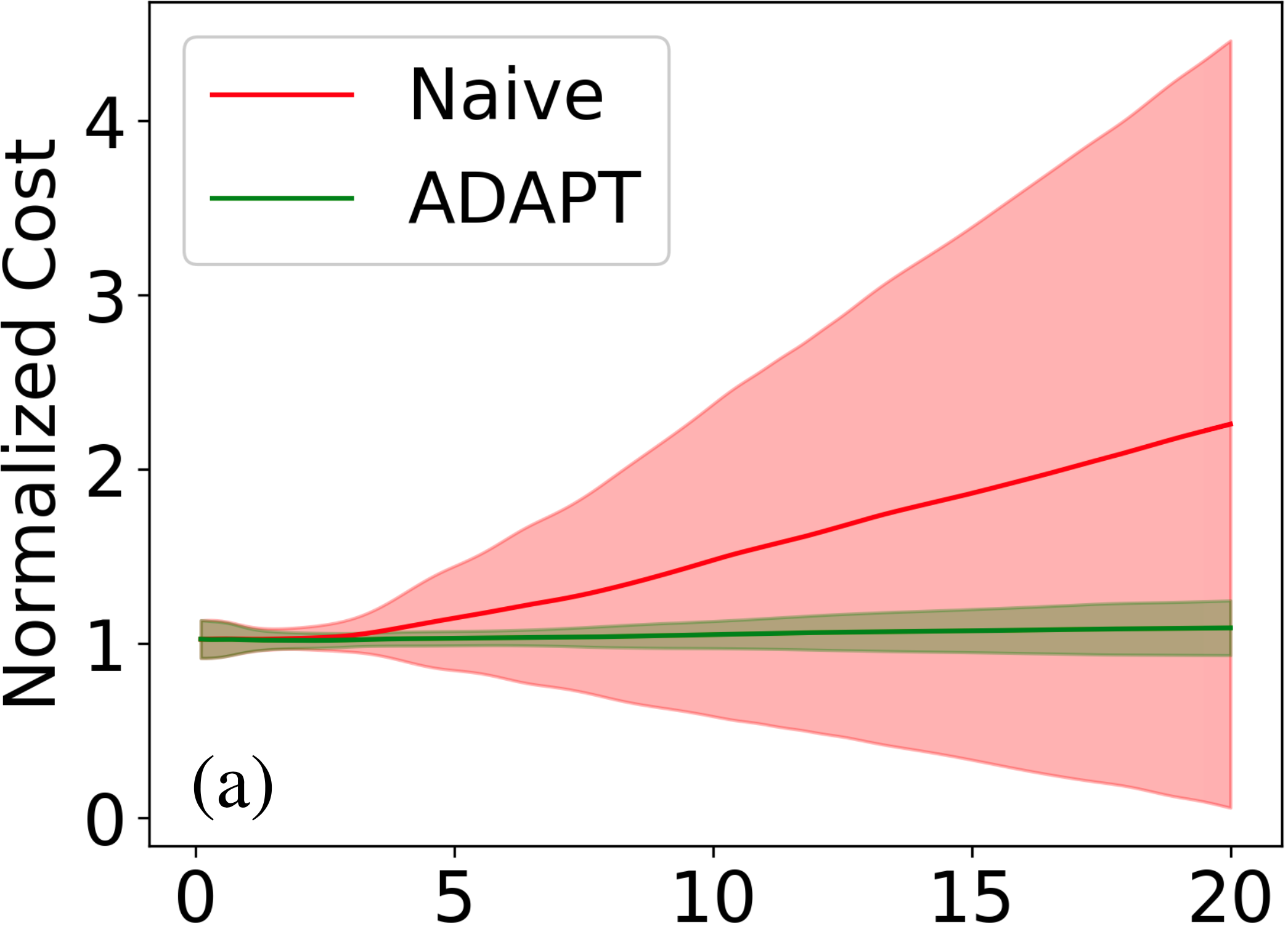}
    \end{subfigure}%
    \begin{subfigure}[t]{0.5\textwidth}
        \hspace{3.5mm}
        \centering
        \includegraphics[scale=0.075]{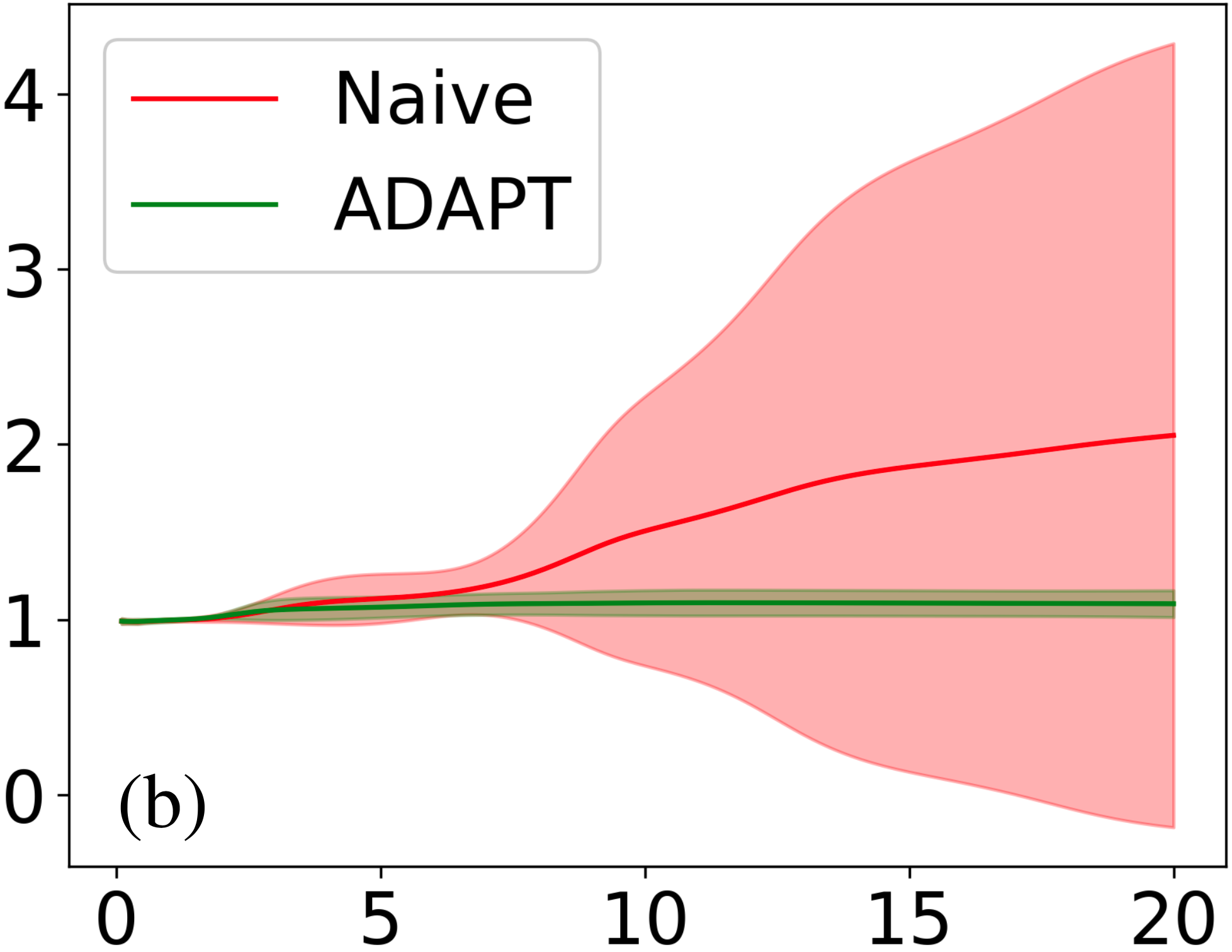}
    \end{subfigure}%% \vspace{1mm}
    \hfill
    \begin{subfigure}[t]{0.5\textwidth}
        %\hspace{-0.0mm}
        \centering
        \includegraphics[scale=0.075]{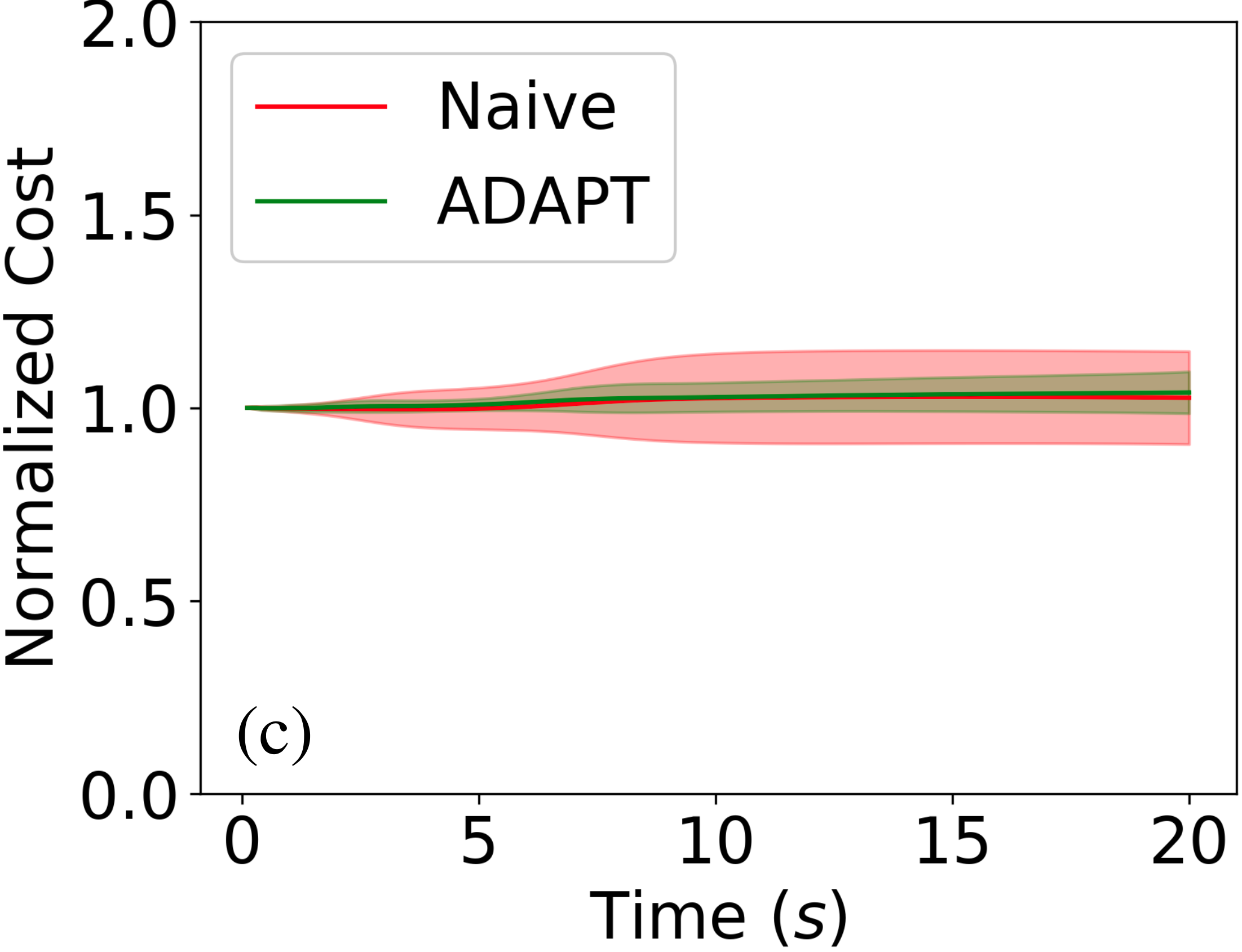}
    \end{subfigure}%
    \begin{subfigure}[t]{0.5\textwidth}
        \hspace{.5mm}
        \centering
        \includegraphics[scale=0.075]{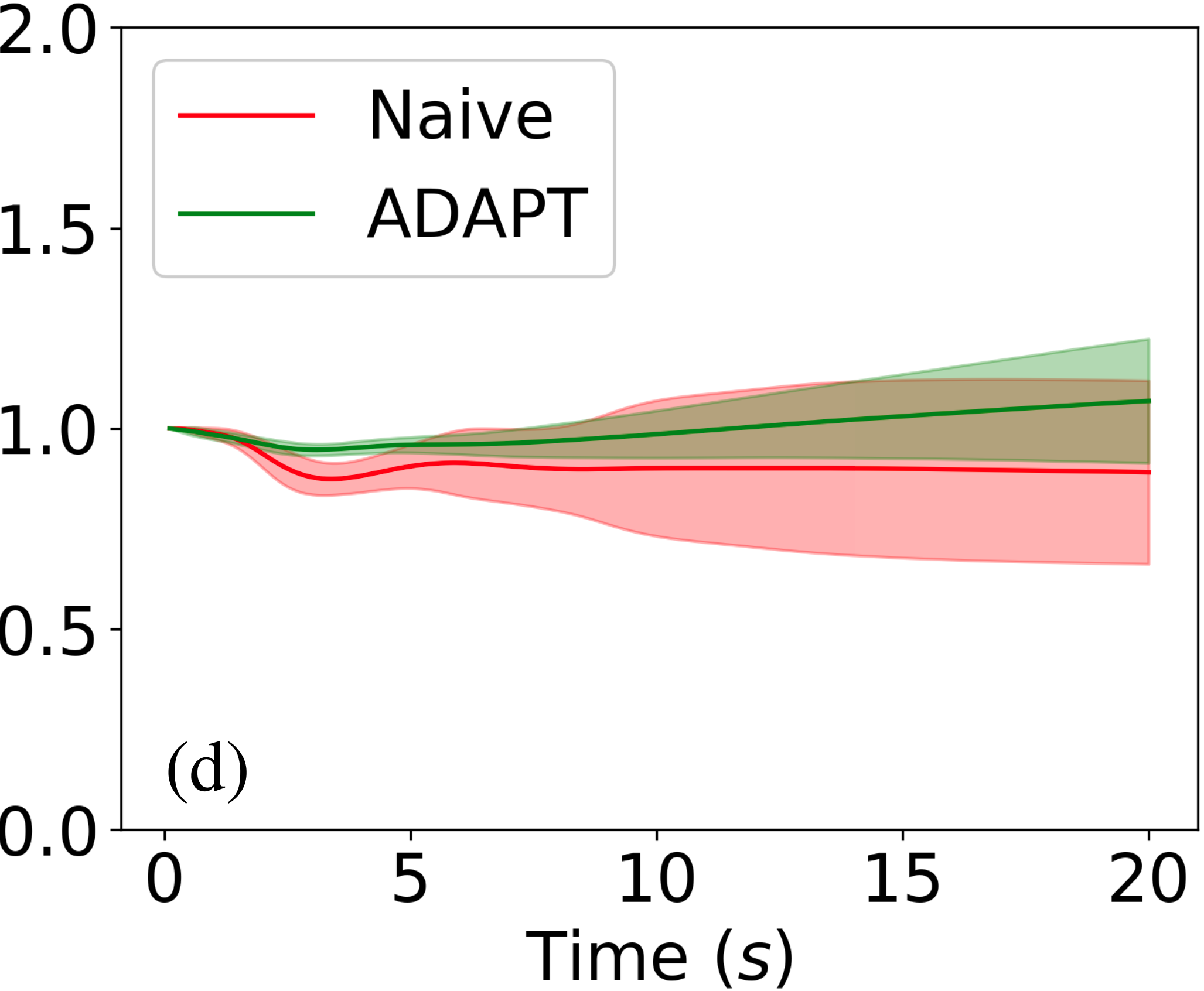}
    \end{subfigure} 
    \vspace{-10pt}
    \caption{Mean cumulative cost over the length of an episode for 50 episodes on the 5-D car environment, using an \textsc{EPOpt}-1 robust policy. The confidence intervals are standard error. The disturbances tested are \textbf{a)} a hill landscape, \textbf{b)} additive control error, \textbf{c)} process noise, and \textbf{d)} dynamics parameter error. The details of each noise source is presented in the supplementary materials.}
    \label{fig:car_cost_epopt}
    %\vspace{-5pt}
\end{figure}

Whereas \algoName's approach to policy transfer relies primarily on stabilization in the target environment, recent work has focused on training robust policies in the source domain, and then performing direct transfer. In the \textsc{EPOpt} policy training framework \cite{rajeswaran2016epopt}, an agent is trained over a family of MDPs in which model parameters are drawn from distributions before each training rollout. Then, a Conditional Value-at-Risk (CVaR) objective function is optimized as opposed to an expectation over all training runs. We apply \algoName on top of an \textsc{EPOpt}-1 policy (equivalent to optimizing expected reward, with model parameters varying), and find that for disturbances explicitly varied during training, the performance of \textsc{EPOpt}-only transfer and \algoName are comparable. We add parameters $\gamma_i$ to the state derivative as follows: $\dot{s} = \lbrack \gamma_1 v \cos \gamma_2 \theta, \gamma_1 v \sin \gamma_2 \theta, \gamma_1  v \gamma_3 \kappa,\gamma_4 a_v,\gamma_5 a_\kappa \rbrack$. Each of these $\gamma_i$ are drawn from Gaussian distributions before each training run, and are fixed during the training run. Although  some of these parameters do not have a physical interpretation, the resulting policies are still robust to both parametric error, as well as process noise. In these experiments, an MPC horizon of 1 second was used (10 timesteps). The matrices $Q$ and $R$ were set as in Section 6.1.

In Figure \ref{fig:car_cost_epopt}, the comparison between the direct transfer of \textsc{EPOpt} policies and \algoName policies is presented. We can see that, for disturbances that are explicitly considered in training (specifically, model parameter error), naive transfer performs slightly better, albeit with higher variance. For other disturbances, like the addition of hills or control noise, \algoName significantly outperforms the directly-transferred policy. Indeed, while the performance of the \algoName policy is comparable to direct transfer for disturbances directly considered in training, unmodelled disturbances are handled substantially better by \algoName. Thus, to extract the best performance, we recommend applying the two approaches in tandem.

%\vspace{-20pt}
\subsection{Environment II: 2-Link Planar Robot Arm}
%\vspace{-10pt}

We next evaluate the performance of \algoName on the \texttt{Reacher} environment of Gym \cite{brockman2016}. This environment is a two link robotic arm that receives reward for proximity to a goal in the workspace, and is penalized for control effort. The state is a vector of the sin and cos of the joint angles, as well as joint angular velocities, the goal position, and the distance from the arm end-effector to the goal. In our tests, we fix one goal location and one starting state for all tests to more directly compare between trials. As such, the variance in normalized cost in experiments is much smaller than in the car experiments. For these experiments, the same noise models were used as in the previous section, with the exception of the ``hills'' disturbance. 

As an approximate dynamics model used for the auxiliary controller, we use the time-varying linear dynamics from \cite{levine2014learning}. This model is fit from rollouts in simulation. Since this model is linear, the MPC problem is convex, and the iterative MPC converges in one iteration. These dynamics are only valid in a local region, and thus must be fit for each desired policy rollout in the target environment. However, since the model is fit from simulation data, it is generated quickly and inexpensively. 

The results for normalized cost comparisons between naive transfer and \algoName are presented in Figure \ref{fig:reacher_cost}. We note that \algoName achieves significantly lower cost for additive control error and process noise, but achieves comparable cost for parameter error. The parameter varied in these experiments was the mass of the links of the arm. The effect of this change is to increase the inertia of the manipulator as a whole. In fact, this can be seen in the Figure 4c. While the cost of the naive transfer increases slowly, the cost of the \algoName trials spikes at approximately time $t = 0.25$. As \algoName is tracking the nominal trajectory, it increases the torque applied, thus suffering a penalty for the increased control action, but resulting in better tracking of the nominal trajectory. 

A similar effect can be observed in Figure 4a. The added control error actually drives the manipulator toward the goal, resulting in the dip in the normalized cost for both trajectories. However, the naive policy overshoots the goal substantially, and thus accrues substantially higher normalized cost than the \algoName experiments.

\begin{figure}[t]
\vspace{-5pt}
    \hspace{-1.5mm}
        \centering
    \begin{subfigure}[t]{0.33\textwidth}
        \centering
        \includegraphics[scale=0.054]{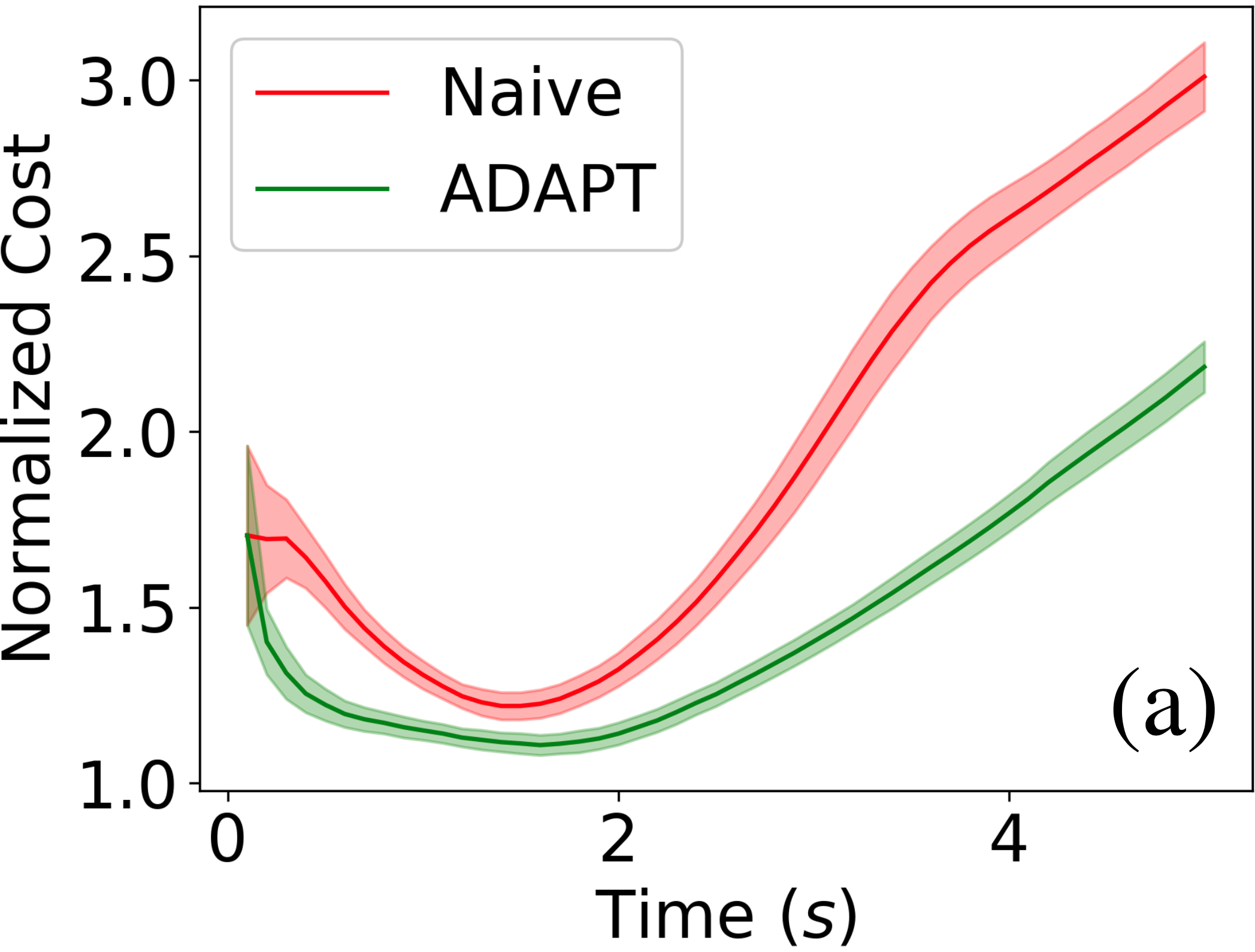}
        \label{fig:reacher_control}
    \end{subfigure}%
    \begin{subfigure}[t]{0.33\textwidth}
        \hspace{0.75mm}
        \centering
        \includegraphics[scale=0.054]{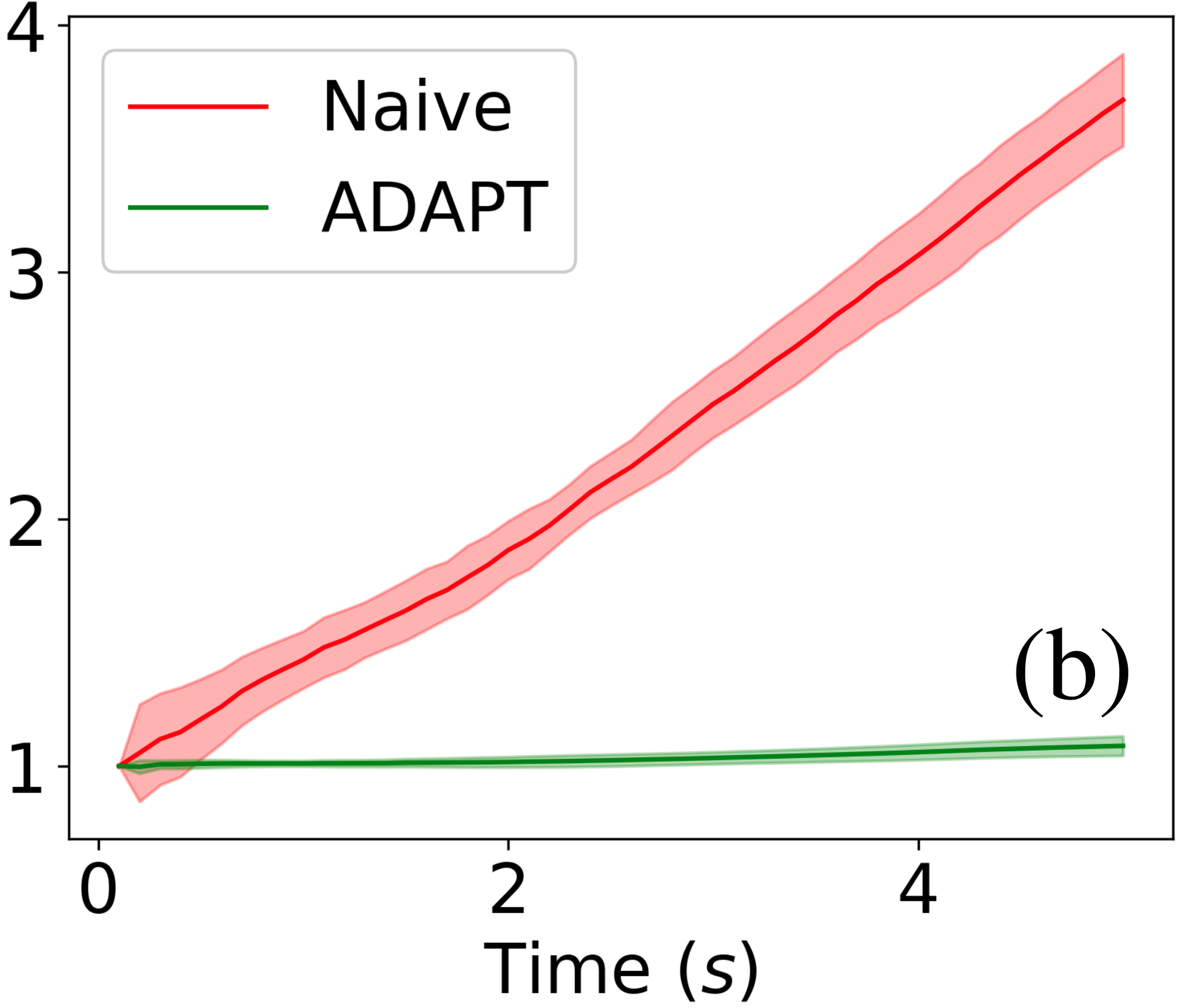}
    \end{subfigure}%
    \begin{subfigure}[t]{0.33\textwidth}
        \centering
        \includegraphics[scale=0.054]{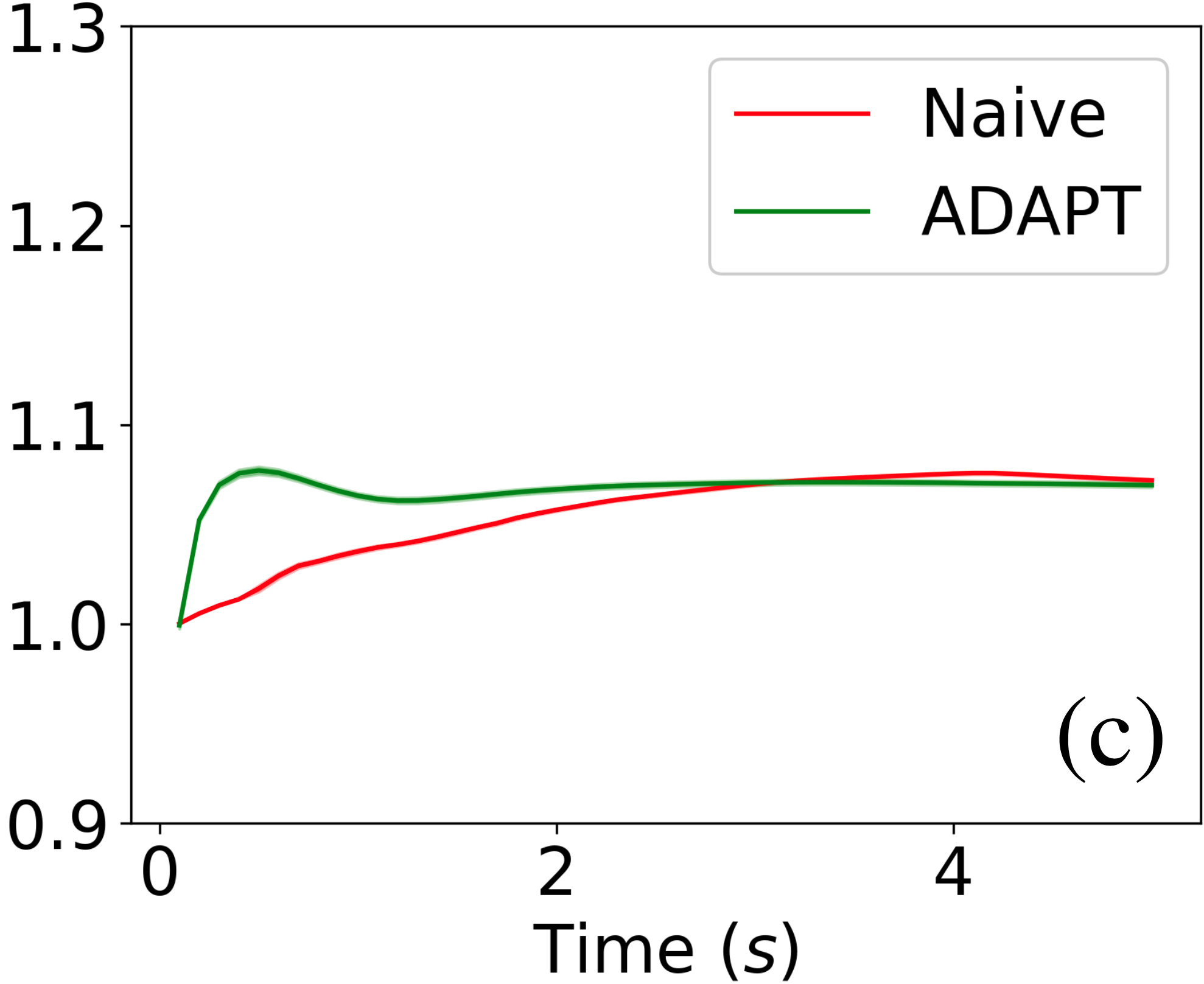}
        \label{fig:reacher_param}
    \end{subfigure}%
    \vspace{-5pt}
    \caption{Mean cumulative cost over the length of an episode for 50 episodes on the reacher environment. The confidence intervals are standard error. The costs are normalized to the cost of the naive policy being rolled out on the simulated environment from the same initial state, to allow more direct comparison across episodes. The \textit{naive} rollout is the nominal policy executed on the target environment. The disturbances tested are \textbf{a)} additive control error, \textbf{b)} process noise, and \textbf{c)} dynamics parameter error.}
    \label{fig:reacher_cost}
\end{figure}

% \input{6-disc}

%\vspace{-20pt}
\section{Conclusion and Outlook} 
\label{sec:conclusion}
%\vspace{-10pt}

We have presented the \algoName algorithm for robust transfer of learned policies to target environments with unmodeled disturbances or model parameters. We have also provided guarantees on the lower bounds of the accrued reward in the target environment for a policy transferred with \algoName. Our results were demonstrated on two different environments with four disturbance models investigated. We additionally discuss usage of robust policies with \algoName. The results presented demonstrate that this method improves performance on unmodeled disturbances by 50-300\%.

In this work, we construct our analysis on the Lipschitz continuity of the dynamics. Indeed, the smoothness of the deviation in dynamics is fundamental to the guarantees we establish. An immediate avenue of future investigation is, therefore, expanding the work presented here to environments with discrete and discontinuous dynamics such as contact. Recently, \textcite{farshidian2016sequential} have extended an iteratively linearized nonlinear MPC, similar to ours, to switching linear systems, which may have potential as a foundation on which to develop a capable contact formulation of \algoName. Additionally, recent work has developed robust, receding horizon tube controllers that allow the establishment of explicit tubes in the state space \cite{singh2017robust}. This approach has the potential to establish explicit safety constraints for operation in cluttered environments. Finally, these methods will also be evaluated on a physical systems. 

% \runinhead{Acknowledgements} The authors were partially supported by the Office of Naval Research, Science of Autonomy Program, under Contract N00014-15-1-2673.

% \section*{Acknowledgments}
% In text Citations \figref{teaser}, also works for tables, sections
% Todo Examples \todo{test}, \tocite{test}.

\renewcommand{\baselinestretch}{0.98} %only do this for refs
%\vspace{-20pt}
\renewcommand*{\bibfont}{\small}
% \bibliography{mpc-model-mismatch}
% \bibliographystyle{icml2017}
\printbibliography %for biblatex
%\newpage
% \input{7-appendix}

\end{document}